\def\eqref#1{equation~\ref{#1}}
\def\1{\bm{1}}
\DeclareMathAlphabet{\mathsfit}{\encodingdefault}{\sfdefault}{m}{sl}
\SetMathAlphabet{\mathsfit}{bold}{\encodingdefault}{\sfdefault}{bx}{n}
\DeclareMathOperator*{\argmin}{arg\,min}
\theoremstyle{plain}
\newtheorem{theorem}{Theorem}[section]
\newtheorem{proposition}[theorem]{Proposition}
\newtheorem{lemma}[theorem]{Lemma}
\theoremstyle{definition}
\newtheorem{definition}[theorem]{Definition}
\newtheorem{assumption}[theorem]{Assumption}
\theoremstyle{remark}
\newcommand{\tofinal}[1]{#1}
\title{Weighted Point Set Embedding for Multimodal Contrastive Learning Toward Optimal Similarity Metric}
\author{\vspace{0cm}
Toshimitsu Uesaka$^{1}$, Taiji Suzuki$^{2,3}$, Yuhta Takida$^{1}$, Chieh-Hsin Lai$^{1}$, \\\vspace{0.1cm}
\bf{Naoki Murata$^{1}$, Yuki Mitsufuji$^{1,4}$} \\\vspace{0.1cm}
$^{1}$Sony AI, $^{2}$The University of Tokyo, $^{3}$RIKEN AIP, $^{4}$Sony Group Corporation\\\vspace{0cm}
\small{\texttt{toshimitsu.uesaka@sony.com, taiji@mist.i.u-tokyo.ac.jp,}}\\\vspace{0cm}
\small{\texttt{\{yuta.takida,chieh-hsin.lai,naoki.murata,yuki.mitsufuji\}@sony.com}}
}
\begin{document}

\maketitle

\begin{abstract}
In typical multimodal contrastive learning, such as CLIP, encoders produce one point in the latent representation space for each input.
However, one-point representation has difficulty in capturing the relationship and the similarity structure of a huge amount of instances in the real world.
For richer classes of the similarity, we propose the use of weighted point sets, namely, sets of pairs of weight and vector, as representations of instances.
In this work, we theoretically show the benefit of our proposed method through a new understanding of the contrastive loss of CLIP, which we call symmetric InfoNCE.
We clarify that the optimal similarity that minimizes symmetric InfoNCE is the pointwise mutual information, and show an upper bound of excess risk on downstream classification tasks of representations that achieve the optimal similarity.
In addition, we show that our proposed similarity based on weighted point sets consistently achieves the optimal similarity.
To verify the effectiveness of our proposed method, we demonstrate pretraining of text-image representation models and classification tasks on common benchmarks.
\end{abstract}

\section{Introduction}

CLIP \citep{radford2021learning} and ALIGN \citep{jia2021scaling} established one of the most common frameworks for multimodal representation learning \citep{guo2019deep}.
In this framework, to obtain the text-image representation, two encoders that map inputs from different modalities onto a shared space are trained with a contrastive loss \citep{chopra2005learning}.
Recent studies have shown that a CLIP model pretrained on a large-scale text-image dataset provides transferable features to various downstream tasks such as linear classification \citep{radford2021learning, jia2021scaling},
text-to-video retrieval \citep{lin2022eclipse}, and text-conditioned image generation \citep{ramesh2022hierarchical}.
Other work has shown that a CLIP model can be used to feed vision information to large language models \citep{alayrac2022flamingo}.
In addition to text and image modalities, this multimodal contrastive learning framework can be applied to other combinations of modalities such as text-audio representations \citep{elizalde2023clap} and combinations of more than two modalities \citep{guzhov2022audioclip, wu2022wav2clip, girdhar2023imagebind}.

Despite the success of CLIP models, it is still arguable whether the similarity structure and representations they provide are suitable for modeling concepts in the real world.
Typical CLIP encoders transform each input image or text into one point embedding in a latent space, and encoders are trained to enhance the similarity of paired concepts in a training dataset, which is defined by the cosine similarity of their embeddings.
However, concepts in the real world have a broadness that raises the relationship of inclusion and many-to-many correspondences.
For example, the text ``a photo of dogs'' can conceivably be the caption of any number of different images, while another text, ``a photo of poodles'', could be the caption of the subset of dog photos, and the photo of poodles should be linked to the multiple captions.
Considering these relationships, representations of concepts should be provided in a manner that goes beyond a singular point and exhibit innate broadness.

In this paper, we propose the use of a weighted point set, namely a set of pairs of a scalar weight and a vector point, as the representation of each concept, which we call Weighted Point Set Embedding (WPSE). We define the similarity of two weighted point sets with a kernel function that defines the similarity of two points.
We also provide a theoretical rationale of the proposed weighted point set embedding through a new understanding of the contrastive loss utilized in CLIP, which we call the symmetric InfoNCE loss.
First, we highlight the fact that minimization of the symmetric InfoNCE loss is achieved when the similarity of two features in the loss is represented by the pointwise mutual information.
Second, we show, under some assumptions, that the optimal (possibly nonlinear) classifier in downstream classification tasks can be constructed by a linear classifier over learned representations when the optimal similarity is achieved.
Last, we show that the proposed similarity of weighted point sets has richer representation capacity than the cosine similarity, which is the bilinear similarity in the latent space. 
Moreover, to demonstrate the effectiveness of the proposed method, we conduct experiments on the Conceptual Caption datasets and common benchmark datasets.

\section{Related Work}
\subsection{Multimodal contrastive representation learning in practice}
CLIP \citep{radford2021learning} and ALIGN \citep{jia2021scaling} utilize contrastive loss to obtain text-image representations, inspired by a series of studies of deep metric learning and unimodal contrastive learning such as multi-class N-pair loss \citep{sohn2016improved}, InfoNCE \citep{ oord2018representation}, SimCLR \citep{chen2020simple}, and ConVIRT \citep{zhang2022contrastive}.
Both works have shown the success of pretrained models with large-scale paired datasets and the contrastive loss, which we call the symmetric InfoNCE in this paper, in zero-shot settings and downstream tasks.

One approach to extending this contrastive framework is to modify the similarity in the symmetric InfoNCE loss.
\citet{NEURIPS2022_cloob} proposed using modern Hopfield networks for computing similarities to enrich the covariance structure of data, while also replacing the InfoNCE with the InfoLOOB.
\citet{desai2023hyperbolic} proposed using the Lorentzian distance in a hyperbolic space as the similarity to capture a hierarchy structure of visual and linguistic concepts.
Following this approach, we propose enriching the class of the similarity based on a nonlinear kernel and weighted point sets.
In contrast to the above studies, we provide an analysis of excess risk in downstream linear classifications.

\subsection{Theoretical understanding of contrastive loss}
Early works attributed the success of the InfoNCE loss \citep{oord2018representation} to the fact that it is a lower bound of mutual information and its optimization leads to maximization of the mutual information between two views of data \citep{hjelm2018learning, NEURIPS2019_ddf35421, tian2020contrastive}.
However, \citet{Tschannen2020On} demonstrated through a thought experiment and empirical results that maximizing tighter bounds on mutual information can result in worse representations. \citet{li2021self} also showed that different representations with the same mutual information can exhibit different qualities. In an alternative perspective, \citet{wang2020understanding} investigated alignment and uniformity to understand the InfoNCE. This idea has affected subsequent works on theoretical analysis of contrastive learning \citep{li2021self, pmlr-v139-zimmermann21a, huang2023towards}.

Regarding the theoretical relationship to downstream tasks, \citet{pmlr-v97-saunshi19a} showed that the downstream classification loss is upper bounded by a quantity monotonically increasing with respect to the contrastive loss. Although \citet{pmlr-v97-saunshi19a} relied on the strong assumption of the conditional independence of two samples, subsequent studies have mitigated this problem. \citet{haochen2021provable} proposed the spectral contrastive loss and provided an upper bound of the linear probe performance based on the augmentation graph. \citet{pmlr-v132-tosh21a} analyzed the excess loss of linear predictors on the landmark embedding from the perspective of multi-view redundancy. \citet{wang2022chaos} showed upper and lower bounds for downstream performance through the conditional feature variance and the augmentation overlap effect. \citet{pmlr-v151-ash22a} investigated upper bounds of a supervised loss in terms of the number of negative samples.  \citet{huang2023towards} analyzed the performance of the nearest neighbor classifier through $(\sigma, \delta)$-augmentation. \citet{shi2023the} investigated the trade-off between label efficiency and universality under assumptions of linear data. \citet{waida2023understanding} proposed the kernel contrastive loss and showed an upper bound of the classification error.
\citet{chen2024understanding} studied zero-shot transfer capability of CLIP with an awareness of unexpected positive pairs.
\citet{zhai2024understanding} analyzed self-supervised representation learning through the lens of RKHS induced by augmentations.

However, we argue that there are still three issues to be resolved in terms of understanding the framework of CLIP.
First, some works provided only upper bounds of downstream losses. If there is a certain gap between the upper bounds and the optimal value, reducing the contrastive loss does not always mean a better performance in the downstream task.
Second, some works changed the target of theoretical analysis from the actual setting of CLIP or InfoNCE and provided guarantees on their proposed losses or different features from usual contrastive learning.
Last, some upper or lower bounds included various statistics (e.g., variance) of the obtained presentations. While such bounds are useful when a perfect alignment is achieved, the perfect alignment is not always practical in the case of multimodal learning, where paired samples are not generated by augmentations of the same instance and a data sample in a modality has relationship to many samples in another modality.

Our work differs from the above studies in the following ways.
First, we consider not only an upper bound of the performance but also the gap from the optimal classifier.
Second, we analyze the symmetric InfoNCE and linear classifiers that are constructed using an approach similar to the actual setting of CLIP.
Last, our assumptions for theoretical results are relatively mild in the case of multimodal representation learning, which is explained in Section \ref{sec:good_estimator}.

\section{Problem setup}
In this section, we introduce the notations and problem settings that we use in following sections.
We formalize the multimodal contrastive representation learning
and the downstream linear classification task
, which is commonly utilized to evaluate representation learning methods \citep{chen2020simple, radford2021learning}.

\subsection{Contrastive representation learning and symmetric InfoNCE} \label{sec:setup-pretrain}
Let $\calX$ and $\calY$ denote the input space of two modalities. For the sake of simplicity, we focus on text-image representation learning, and we denote the image space by $\calX$ and the text space by $\calY$.
Let \tofinal{$p_{X,Y}(x,y)$} denote the density of the joint data distribution \tofinal{of random variables $(X, Y)$ defined} over $\calX \times \calY$, and let
\tofinal{$p_X(x)$ and $p_Y(y)$} denote the density of the marginal distribution \tofinal{of $X$ and $Y$}, respectively.
If there is no ambiguity, we omit subscripts of probability (density) functions such as $p(x,y), p(x)$, and $p(y)$. We denote the conditional probability density of $y$ given $x$ as \tofinal{$p_Y(y \mid x)$}. For a subset $\calY' \subseteq \calY$, we denote the probability with which \tofinal{$Y\in\calY'$} as $\tofinal{P_Y}(\calY') := \int_{y\in\calY'} p(y) \d y$. We also denote the conditional probability of a subset $\calY'$ given $x$ as $\tofinal{P_Y}(\calY' \mid x):=\int_{y\in\calY'} p(y \mid x) \d y$.
For a probability density function $p$, we denote the support of the probability as $\supp p$.

Given a batch of $N$ image-text pairs $(x_1, y_1), \dots, (x_N, y_N) \sim \tofinal{p_{X,Y}}$,
CLIP \citep{radford2021learning} introduced the following contrastive loss to train an image encoder $\funcdoms{f_\calX}{\calX}{\setR^d}$, a text encoder $\funcdoms{f_\calY}{\calY}{\setR^d}$, and a trainable temperature parameter $\tau \in \setRpp$.
\begin{align}
    \hat{\mathcal{L}}(f_\calX, f_\calY, \tau) = \frac{1}{2} \biggl[
        &-\frac{1}{N} \sum_{i=1}^N \ln \frac{
            \exp \prn{f_\calX(x_i)^\top f_\calY(y_i) / \tau}
        }{
            \sum_{k=1}^N \exp \prn{f_\calX(x_k)^\top f_\calY(y_i) /\tau}} \nonumber \\
        &-\frac{1}{N} \sum_{i=1}^N \ln \frac{
            \exp \prn{f_\calX(x_i)^\top f_\calY(y_i) / \tau}
        }{
            \sum_{k=1}^N \exp \prn{f_\calX(x_i)^\top f_\calY(y_k) / \tau}}
    \biggl]  \label{eq:emp_InfoNCE}
\end{align}
We call this the symmetric InfoNCE loss. By minimizing it, the similarity of two features from paired samples $(x_i, y_i)$ is expected to be large, and the similarity of two features from independent samples $x_i$ and $y_j ~(i \neq j)$ is expected to be small. Here, the similarity of two features is measured by the cosine similarity $f_\calX(x)^\top f_\calY(y)$. Note that the features $f_\calX(x)$ and $f_\calY (y)$ of typical CLIP are L2-normalized.
For a generalized formulation, we replace the scaled similarity $f_\calX(x)^\top f_\calY(y) / \tau$ with a function $\funcdoms{g}{\calX\times\calY}{\setR}$ of two samples $(x, y) \in \calX \times \calY$. In addition, following the asymptotic form of the InfoNCE in \citet{wang2020understanding}, we consider the population expectation form of the symmetric InfoNCE. By considering the limit as $N \rightarrow \infty$, we have the population expectation form of the symmetric InfoNCE:
\begin{align}
    \calL_\NCE(g) =~& \frac{1}{2} \Expect{p(x,y)
    }{
        - \ln \frac{\exp g\prn{x,y}
        }{
        \Expect{p_X(x')}{\exp g\prn{x',y}
        }}
    } 
    + \frac{1}{2} \Expect{p(x,y)}{
        - \ln \frac{\exp g\prn{x, y}
        }{
        \Expect{p_Y(y')}{\exp g\prn{x, y'}
        }}
    },  \label{eq:gen_InfoNCE}
\end{align} 
where we omit the constant term that comes from $\ln N$.

\subsection{Downstream classification task} \label{sec:setup-downstream}
As a common evaluation of the learned representations with the symmetric InfoNCE, we consider a supervised classification task with $K$ labels. For an integer $M$, we define $\intset{M}:= \set{1,\dots,M}$.
\tofinal{Let $C$ denote a random variable for labels.}
Let $\tofinal{P_C}(c\mid x)$ be the conditional probability of the label $c\in\intset{K}$ given the data $x\in\calX$.
We define $p(x,c) = \tofinal{P_C}(c\mid x)p_X(x)$ as \textit{the density} of the joint distribution of data $x$ and its label $c$. We assume that pairs of data and its label $(x, c)$ can be drawn from $p(x,c)$.
In this supervised learning setting, a classifier $\funcdoms{h}{\calX}{\setR^K}$ is often trained by minimization of the softmax cross-entropy loss given by
$
    \calL_\SUP (h) := \Expect{p(x,c)}{-\ln \frac{\exp h(x)_c}{\sum_{i=1}^{K} \exp h(x)_i}},
$
where $h(x)_i$ denotes the $i$-th entry of $h(x) \in \setR^K$. In downstream linear classifications after the contrastive learning, $h$ is constructed as a linear classifier over the learned representation. Given the encoder $f_\calX$, we formalize this linear classifier as
$h(x;f_\calX) := W^\top f_\calX(x) + b$,
where $W \in \setR^{d\times K}$ is a weight and $b\in\setR^K$ is a bias.
With this $h(x;f_\calX)$, the downstream classification task is formalized as the minimization problem of $\calL_\SUP$ with respect to $W$ and $b$:
$
    \min_{W\in\setR^{d\times K}, b\in \setR^K} \calL_\SUP \prn{h(x; f_\calX)}.
$

\section{Theoretical guarantee via pointwise mutual information}  \label{sec:theoretical_guarantee}
In this section, we derive the upper bound for the performance of downstream classification tasks.
First, we highlight that the optimal similarity of the symmetric InfoNCE loss is represented by the pointwise mutual information.
Second, we show that if the optimal similarity is obtained, there is a linear classifier on the learned representation that is close to the optimal (possibly nonlinear) classifier.
Last, we investigate an error caused by the deviation from the optimal similarity.
\subsection{Pointwise mutual information as optimal similarity}  \label{sec:pmi_is_opt_sim}
Our analysis starts with the following fact that the optimal similarity of the symmetric InfoNCE is represented by the pointwise mutual information \citep{oord2018representation, zhang2023deep}.

\begin{proposition}[Restatement of Proposition 1 in \citet{zhang2023deep}] \label{prop:optimal_sim}
Let $X$ and $Y$ denote two random variables having the joint probability density $p$. Then, the mutual information of $X$ and $Y$, $I(X,Y) := \Expect{p(x,y)}{\ln \frac{p(x,y)}{p(x)p(y)}}$ is an upper bound of $-\calL_\NCE(g)$.
Moreover, if the function $g$ satisfies
$ g(x,y) = \ln \frac{p(x,y)}{p(x)p(y)} + \const$, then the equality $I(X,Y) = -\calL_\NCE(g)$ holds.
\end{proposition}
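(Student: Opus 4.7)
The plan is to treat each of the two softmax terms inside $\calL_\NCE(g)$ as defining a variational conditional density, and then apply the standard mutual-information / KL-divergence identity twice. Concretely, for any admissible $g$ I would define
\begin{align*}
    q_1(x \mid y) := \frac{p(x)\exp g(x,y)}{\Expect{x'\sim\calD_\calX}{\exp g(x',y)}}, \qquad
    q_2(y \mid x) := \frac{p(y)\exp g(x,y)}{\Expect{y'\sim\calD_\calY}{\exp g(x,y')}},
\end{align*}
and verify that each is a bona fide conditional density on $\supp p$ (nonnegative and normalized to $1$) whenever the denominators are finite and strictly positive.

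The key algebraic step is the rewriting
\begin{align*}
    \ln \frac{\exp g(x,y)}{\Expect{x'\sim\calD_\calX}{\exp g(x',y)}}
    \;=\; \ln \frac{q_1(x\mid y)}{p(x)}
    \;=\; \ln \frac{p(x\mid y)}{p(x)} \;-\; \ln \frac{p(x\mid y)}{q_1(x\mid y)}.
\end{align*}
Taking the expectation under $p(x,y)$, the first term on the right yields $I(X,Y)$ and the second yields $\Expect{y\sim\calD_\calY}{\mathrm{KL}(p(\cdot\mid y)\,\|\,q_1(\cdot\mid y))} \ge 0$. Running the same argument on the second half of $\calL_\NCE(g)$ with $q_2$ and combining with the $\tfrac{1}{2}$ prefactors gives the exact identity
\begin{align*}
    -\calL_\NCE(g) \;=\; I(X,Y) \;-\; \tfrac{1}{2}\Expect{y}{\mathrm{KL}(p(\cdot\mid y)\,\|\,q_1(\cdot\mid y))} \;-\; \tfrac{1}{2}\Expect{x}{\mathrm{KL}(p(\cdot\mid x)\,\|\,q_2(\cdot\mid x))},
\end{align*}
which already implies the upper bound by nonnegativity of KL divergence. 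For the equality claim, substituting $g(x,y) = \ln \frac{p(x,y)}{p(x)p(y)} + C$ gives $\exp g(x,y) = e^C\, p(y\mid x)/p(y) = e^C\, p(x\mid y)/p(x)$, from which a direct calculation shows $q_1(x\mid y) = p(x\mid y)$ and $q_2(y\mid x) = p(y\mid x)$, so both KL terms vanish and the identity $-\calL_\NCE(g) = I(X,Y)$ follows.

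I expect the only delicate point to be the bookkeeping on $\supp p$: the identity need only hold almost surely under $p(x,y)$, and the conditionals $p(x\mid y)$, $p(y\mid x)$ are defined only where the corresponding marginals are positive, so one should restrict attention to that set throughout. Beyond this, the proof is a symmetric double application of the standard Barber--Agakov / InfoNCE variational bound and presents no deeper obstacle.
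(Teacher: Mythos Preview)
Your proof is correct. The KL decomposition you give,
\begin{align*}
    -\calL_\NCE(g) \;=\; I(X,Y) \;-\; \tfrac{1}{2}\Expect{y}{\KL{p(\cdot\mid y)}{q_1(\cdot\mid y)}} \;-\; \tfrac{1}{2}\Expect{x}{\KL{p(\cdot\mid x)}{q_2(\cdot\mid x)}},
\end{align*}
is exactly the right identity, and the equality case follows immediately once you check that the normalizing denominators collapse to $e^C$.

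One remark on the comparison you were asked for: the present paper does not actually supply its own proof of this proposition; it is stated as a restatement of Proposition~1 in \citet{zhang2023deep} and used as a starting point for the subsequent analysis. Your argument is the standard Barber--Agakov / InfoNCE variational derivation and is precisely the argument one would expect in that reference as well, so there is no meaningful methodological difference to report. Your note about restricting to $\supp p$ so that the conditionals and KL divergences are well defined is appropriate and is the only care needed.
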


In other words, when we consider the minimization problem of $\calL_\NCE(g)$ in terms of the measurable function $g$ over $\calX \times \calY$, the optimal similarity is equal to the pointwise mutual information up to a constant.
We denote this optimal similarity by $g^*(x,y):= \ln \frac{p(x,y)}{p(x)p(y)} + \Gamma$ for some $\Gamma \in \setR$.

\subsection{Pointwise mutual information estimator leads to a good linear classifier}  \label{sec:good_estimator}
Next, we show that, under some conditions, there exists a linear classifier over learned representations that is close to the optimal classifier $h^* = \argmin_h \calL_\SUP(h)$ if we successfully obtain encoders that achieve the optimal similarity $g^*(x,y)$.
It is known that the log probability of the label $c$ conditioned by data $x$ is the minimizer of $\calL_\SUP$ up to a constant: $h^*(x)_i = \ln \tofinal{P_C}(i\mid x) + \const$, for $i\in\intset{K}$.
This is because $\calL_\SUP$ is represented by using the cross entropy $H(\cdot,\cdot)$ as follows:
$
    \calL_\SUP (h) = \Expect{p(x)}{H\prn[\big]{\tofinal{P_C(C \mid x)}, \tofinal{Q_C(C \mid x; h)}}}
$, where $\tofinal{Q_C(c \mid x; h)} := \frac{\exp h(x)_c}{\sum_{i=1}^K \exp h(x)_i}$ for $c\in\intset{K}$.

To explain our theoretical results, we define several probability (density) functions.
We consider $K$ disjoint subsets $\calY_i ~ (i\in\intset{K}) \subseteq \calY$, i.e., for $i \neq j$, $\calY_i \cap \calY_j = \emptyset$. Let $\Tilde{\calY} = \calY_1 \cup \calY_2 \cup \cdots \cup \calY_K$. Note that $\tilde{\calY}$ is not necessarily equal to $\calY$.
We assume that $P(\calY_i)\neq0$ for every $i$.
We define the conditional probability of $y$ given $\calY_i$ as
$\tofinal{p_Y}(y \mid \calY_i) := \frac{p(y)}{P(\calY_i)}$ if $y \in \calY_i$, otherwise 0.
Note that $\tofinal{p_Y}(Y \mid \calY_i)$ is a probability density function on $\calY$ (i.e., $\int_{y\in\calY} \tofinal{p_Y}(y\mid \calY_i) \d y = 1$).
Similarly, we define the conditional probability of $y$ given $x$ and $\calY_i$ as $\tofinal{p_Y}(y\mid x,\calY_i):= \frac{p(y\mid x)}{P(\calY_i\mid x)}$ if $y\in \calY_i$, otherwise $0$.
For a label $c\in\intset{K}$, we define the conditional probability of a subset $\calY_c$ given $x$ and the union of disjoint subsets $\Tilde{\calY}$ as \tofinal{$P_C(c \mid x; \prn{\calY_i}_{i\in\intset{K}}) := \frac{P_Y(\calY_c \mid x)}{P_Y(\tilde{\calY} \mid x)}$}. We regard this as a probability function of labels over $\intset{K}$ as \tofinal{$\sum_{c\in\intset{K}} \tofinal{P_C} (c \mid x; \prn{\calY_i}_{i\in\intset{K}} ) = 1$}.
Last, we construct a linear classifier on learned representations. Given the disjoint subsets $(\calY_i)_{i\in\intset{K}}$ and the components of similarity $g(x,y) = f_\calX(x)^\top f_\calY(y) / \tau$,
we define $\bar{h}^g (x) := \bar{W}^\top f_\calX(x) + \bar{b}$, with a weight $\bar{W} := \begin{bmatrix}
        \bar{w}_1, & \bar{w}_2, & \dots &,~ \bar{w}_K
    \end{bmatrix} \in \setR^{d\times K}, 
    \bar{w_i} := \Expect{\tofinal{p_Y}(y \mid \calY_i)}{\frac{1}{\tau} f_\calY(y)} \in \setR^d$, and a bias $ 
    \bar{b} := \begin{bmatrix} \ln \tofinal{P_Y}(\calY_1), & \ln \tofinal{P_Y}(\calY_2), & \dots &,~ \ln \tofinal{P_Y}(\calY_K) \end{bmatrix}^\top \in \setR^d
$.

Now, we show an upper bound on the excess risk of the downstream classification when we obtain encoders that achieve the optimal similarity of the symmetric InfoNCE.
\begin{theorem} \label{th:excess_loss_bound}
    Let $(\mathcal{Y}_i)_{i\in\intset{K}}$ be any choice of disjoint subsets in $\calY$.
    Assume that $g^*(x,y) := \frac{1}{\tau^*} f^*_\calX(x)^\top f^*_\calY(y) = \ln \frac{p(x,y)}{p(x)p(y)} + \const$ holds 
    for any $x\in \supp p(x) \subseteq \calX$ and any $y\in\Tilde{\calY}$. Then, 
    \begin{align}
        \calL_\SUP(\bar{h}^{g^*}) - \calL_\SUP (h^*) &\leq \Expectshort{p(x)}{\KLshort{\tofinal{P_C(C\mid x)}}{\tofinal{P_C (C \mid x;\prn{\calY_i}_{i\in\intset{K}})}}} \nonumber \\
        &\quad+ \Expectshort{p(x,c)}{\KLshort{\tofinal{p_Y(Y \mid \calY_c)}}{\tofinal{p_Y(Y \mid x,\calY_c)}}}.
        \label{eq:excess_risk}
    \end{align}
\end{theorem}
We defer the proof to Appendix \ref{sec:proof_excess_loss_bound}

\paragraph{Remark.}
The first term in RHS of Eq. (\ref{eq:excess_risk}) becomes zero when, for any $c$ and $x$, the conditional probability $\tofinal{P_Y}(\calY_c|x)$ is proportional to the conditional probability of label $\tofinal{P_C}(c|x)$. The second term in RHS becomes zero when $y$ is independent of $x$ given a prior knowledge that $y$ is in $\calY_c$.
Considering the prompt ensembling in zero-shot classifications \citep{radford2021learning} and the properties of text data, we claim that there exist subsets $(\calY_i)_{i\in\intset{K}}$ that satisfy most of those conditions.
To construct a classifier in zero-shot classification, \citet{radford2021learning} proposed ensembling embeddings of prompt templates such as ``a photo of a \{\}'' and ``an example of a \{\}'', where the brackets are replaced with the labels such as ``dog'' and ``cat''.
Since the set of prompts for each label is generated simply by inserting words representing the label into templates, the probability of each set should be roughly proportional to the probability of the label.  
In addition, prompt templates lack most of the information specific to images, so each prompt in the set can be considered more or less independent of images. Assuming these properties of the text data domain, the excess risk of the linear classifier is close to zero when trained encoders achieve the optimal similarity, which is the pointwise mutual information.

\subsection{Excess risk analysis via the gap from the pointwise mutual information}  \label{sec:whole_error}
We have observed that a similarity equal to the pointwise mutual information (up to a constant) leads to a small excess risk of linear classifiers on the downstream classification. However, an actual similarity $g(x,y)$ obtained in pretraining is possibly different from $g^*(x,y)$ because of the non-convexity of the optimization problem and the insufficient representational capability of the class of similarity, $\set{(x,y) \mapsto f_\calX(x)^\top f_\calY(y) / \tau}[f_\calX(x), f_\calY(y) \in \setR^d, \tau \in \setRpp]$.
To consider the effect of the gap in the similarity, we decompose the risk of the downstream task as follows:
\begin{align}
    \calL_\SUP(\bar{h}^{g}) - \calL_\SUP(h^*)
    = \prn{\calL_\SUP(\bar{h}^{g}) - \calL_\SUP(\bar{h}^{g^*})}
    + \prn{\calL_\SUP(\bar{h}^{g^*}) - \calL_\SUP(h^*)}.  \label{eq:decomposition}
\end{align}
The second term in RHS of Eq. \ref{eq:decomposition} is already bounded by Theorem \ref{th:excess_loss_bound}.
Regarding the first term, we have the following bound.
\begin{lemma}  \label{lem:error_effect}
Assume that, there exists $\Delta\geq0$ such that $\abs{g(x,y) - g^*(x,y)} \leq \Delta$ for all $x \in \supp p(x)$ and all $y \in \supp p(y)$. Then, it holds that  $\abs{\calL_\mathrm{sup}(\bar{h}^g) - \calL_\mathrm{sup}(\bar{h}^{g^*})} \leq 2\Delta$.
\end{lemma}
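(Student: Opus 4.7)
The plan is to reduce the lemma to the statement that the softmax cross-entropy loss is $2$-Lipschitz in the logits (in $\ell_\infty$ norm), together with a pointwise comparison of the two classifiers' logits.

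First, I would rewrite $\bar h^g(x)_i$ directly in terms of the similarity $g$. By construction, $\bar w_i^\top f_\calX(x) = \Expect{p(y|\calY_i)}{\tfrac{1}{\tau} f_\calY(y)^\top f_\calX(x)} = \Expect{p(y|\calY_i)}{g(x,y)}$, so
\begin{align*}
    \bar h^g(x)_i = \Expect{p(y|\calY_i)}{g(x,y)} + \ln P(\calY_i),
\end{align*}
and the analogous identity holds for $\bar h^{g^*}$. Since $p(y|\calY_i)$ is supported on $\calY_i \subseteq \supp p(y)$, the hypothesis $|g(x,y)-g^*(x,y)| \le \Delta$ applies under this expectation, yielding the pointwise logit bound
\begin{align*}
    \bigl|\bar h^g(x)_i - \bar h^{g^*}(x)_i\bigr| \le \Delta \qquad \text{for all } x \in \supp p(x),~ i \in \intset{K}.
\end{align*}

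Next, I would exploit the standard fact that $\ell(h,c) := -h_c + \ln \sum_{i} \exp h_i$ is $2$-Lipschitz in $h$ with respect to $\|\cdot\|_\infty$. Writing $\delta_i(x) := \bar h^g(x)_i - \bar h^{g^*}(x)_i$, the difference decomposes as
\begin{align*}
    \ell\bigl(\bar h^g(x), c\bigr) - \ell\bigl(\bar h^{g^*}(x), c\bigr) = -\delta_c(x) + \ln \frac{\sum_i \exp\bigl(\bar h^{g^*}(x)_i + \delta_i(x)\bigr)}{\sum_i \exp \bar h^{g^*}(x)_i}.
\end{align*}
The first term is bounded in absolute value by $\Delta$. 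For the second, factoring out $e^{\pm\Delta}$ (using $|\delta_i(x)|\le\Delta$) shows the ratio lies in $[e^{-\Delta}, e^{\Delta}]$, so its logarithm also has absolute value at most $\Delta$. Hence the integrand is bounded by $2\Delta$ uniformly in $(x,c)$, and taking the expectation under $p(x,c)$ and applying the triangle inequality gives $|\calL_\SUP(\bar h^g) - \calL_\SUP(\bar h^{g^*})| \le 2\Delta$.

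There is no real obstacle here; the only subtlety worth flagging is making sure that the uniform bound on $|g-g^*|$ is indeed applicable under $\Expect{p(y|\calY_i)}{\cdot}$, which requires $\calY_i \subseteq \supp p(y)$ (this is implicit because $p(y|\calY_i)$ is defined through $p(y)$ restricted to $\calY_i$). The rest is the routine $\ell_\infty$-Lipschitz estimate for the log-sum-exp normalization.
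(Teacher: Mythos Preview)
Your proof is correct and reaches the same conclusion via the same first step (rewriting $\bar h^g(x)_i$ as $\Expect{p(y|\calY_i)}{g(x,y)}+\ln P(\calY_i)$ to get $\|\bar h^g(x)-\bar h^{g^*}(x)\|_\infty\le\Delta$), but the second step differs in technique. The paper establishes the $2$-Lipschitz property of $z\mapsto\ln\frac{e^{z_c}}{\sum_i e^{z_i}}$ by computing its gradient, applying the mean value theorem, and using H\"older's inequality together with the bound $\sum_i\bigl|\partial\vsigma_c/\partial z_i\bigr|\le 2$. You instead split the loss difference into $-\delta_c(x)$ plus the log of a ratio of partition functions and bound the latter by sandwiching each summand via $e^{-\Delta}\le e^{\delta_i(x)}\le e^{\Delta}$. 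Your route is more elementary (no differentiability or mean value theorem needed) and arguably cleaner; the paper's route has the minor advantage of exhibiting the Lipschitz constant as an explicit $\ell_1$ gradient bound, which is a reusable fact. Either way the argument is complete, and your remark that $\calY_i\subseteq\supp p(y)$ (implicit from the definition of $p(y|\calY_i)$) is the only point worth noting.
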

We defer the proof to Appendix \ref{sec:proof_error_effect}.
From Theorem \ref{th:excess_loss_bound}, Lemma \ref{lem:error_effect} and the fact that $\min_{W\in\setR^{d\times K},b\in\setR^K}\calL_\SUP(W^\top f_\calX(\cdot)+b) \leq \calL_\SUP(\bar{h}^g)$, we have the following result.
\begin{theorem} \label{th:excess_risk_with_gap}
    Assume that there exist $K$ disjoint subsets $\calY_i ~ (i\in\intset{K}) \subseteq \calY$ such that 
        $\KLshort{\tofinal{p_C(C \mid x)}}{\tofinal{p_C(C \mid x;\prn{\calY_i}_{i\in\intset{K}}})} \leq \epsilon_1$, and $
        \KLshort{\tofinal{p_Y(Y \mid \calY_c)}}{\tofinal{p_Y(Y \mid x,\calY_c)}} \leq \epsilon_2$,
    for all $x\in \supp p(x)$, for all $c\in\intset{K}$, and for some non-negative constants $\epsilon_1, \epsilon_2 \geq 0$.
    Assume that the uniform approximation error of the optimization problem $\argmin_g \calL_\NCE(g) $ is bounded by a constant $\Delta \geq 0$, i.e., $\abs{g(x,y) - g^*(x,y)} \leq \Delta$ for all $x \in \supp p(x)$ and $y \in \supp p(y)$. Then, it holds that
    \begin{align}
        \min_{W\in\setR^{d\times K},b\in\setR^K}\calL_\SUP(W^\top f_\calX(\cdot)+b) - \calL_\SUP(h^*) \leq \epsilon_1 + \epsilon_2 + 2\Delta.  \label{eq:main_result}
    \end{align}
\end{theorem}
\paragraph{Remark.} $\Delta$ indicates the gap between actually obtained similarity $g(x,y)$ and the optimal similarity $g^*(x,y)$, which is the pointwise mutual information. Theorem \ref{th:excess_risk_with_gap} implies that the approximation error of the optimal similarity in pretraining may degrade the performance of downstream classifications.

\section{Augmented similarity by weighted point sets}
\label{sec:method}
We have observed that the optimal similarity of symmetric InfoNCE for pretraining leads to a small excess risk on downstream classifications.
Here, a question arises: ``To what extent can the class of similarity approximate the pointwise mutual information?''
In this section, we show a limitation of the typical similarity that is commonly utilized in CLIP.
To overcome the issue, we propose a new class of similarities and show a theoretical guarantee of the approximation capability of the proposed class.

\subsection{Limitation of the inner-product similarity in finite dimensional spaces} \label{sec:limitation_bilinear}
Consider a $d$-dimensional feature space. We assume there are $N (> d+1)$ pairs of samples, $(x_1, y_1), \dots, (x_N, y_N) \in \calX \times \calY$. We define $Z_\calX, Z_\calY \in \setR^{d\times N}$ as the concatenation of features of samples as $Z_\calX := \sqbr{f_\calX (x_1), \dots, f_\calX (x_N)}$ and $Z_\calY := \sqbr{f_\calY (y_1), \dots, f_\calY (y_N)}$.
During pretraining with the symmetric InfoNCE, the similarity matrix $Z_\calX^\top Z_\calY$ is fit to the optimal similarity matrix $G\in \setR^{N\times N}$ up to a constant $\Gamma\in\setR$, where $G_{ij}=\ln \frac{p(x_i,y_j)}{p(x_i)p(y_j)}$.
Regarding the gap $\Delta$ to the optimal similarity, it holds that $\Delta \geq \sup_{x\in\supp p(x),y\in \supp p(y)} \abs{g(x,y) - g^*(x,y)} \geq \sup_{i,j} \abs{(Z_x^\top Z_y)_{ij} - \Gamma - G_{ij}}$.
However, it also holds that $\rank \prn{Z_x^\top Z_y + \Gamma J} \leq d+1$, where $J\in \setR^{N\times N}$ is the matrix in which all entries are 1 (See Proposition \ref{prop:rank_gram}). Thus, if the rank of $G$ is $N > d+1$, there exists a certain error of the approximation of $G$. In other words, to completely capture the structure of the pointwise mutual information, the dimension of feature $d$ is required to be more than the number of intrinsic instances in the data space, which is infeasible in real-world scenarios.
\subsection{Augmented similarity by a nonlinear kernel and weighted point sets}

Increasing the dimension of the feature is the simplest way to enhance the capability of the similarity. However, this often requires a larger deep neural network model, which leads to heavier computation both in the contrastive learning phase and in the downstream tasks.
As an alternative approach, we propose enriching the class of similarity by using a nonlinear kernel function and weighted point sets (namely, sets of a pair of weight and point).
Figure \ref{fig:overview} shows the overview of the proposed method. We replace the similarity in the symmetric InfoNCE with a similarity between two weighted point sets produced by encoders.

\begin{figure}[t]
\begin{center}
\includegraphics[width=1.\linewidth]{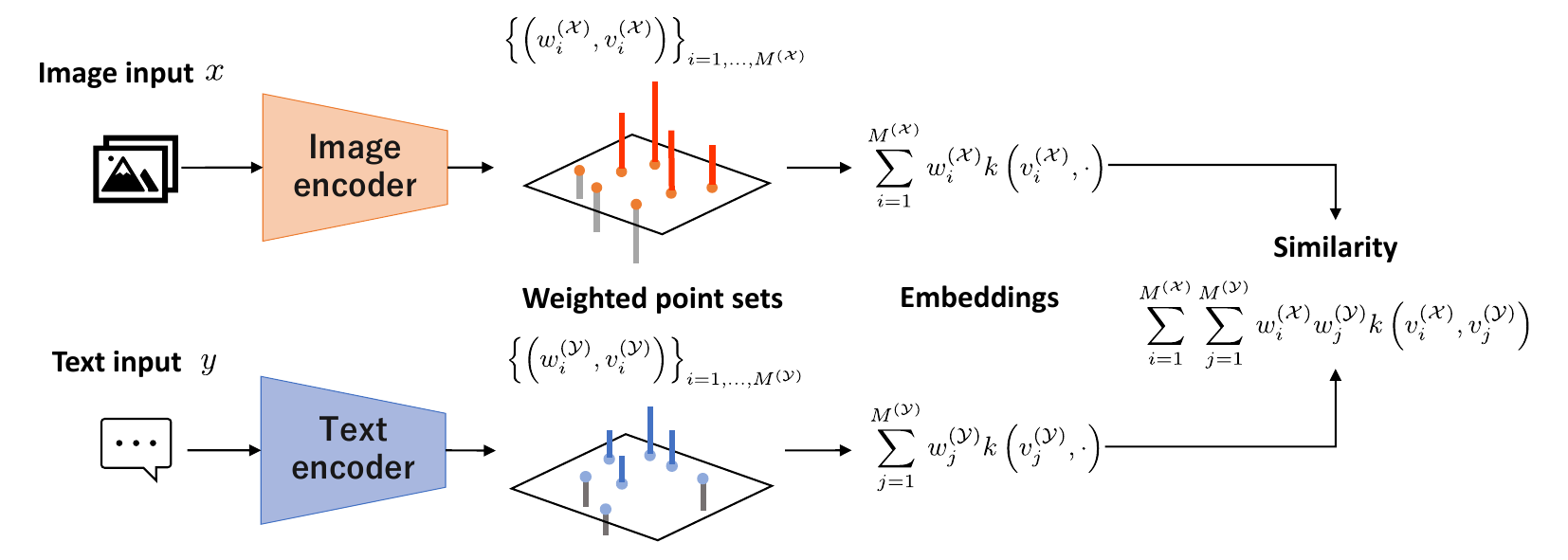}
\end{center}
\vspace{-12pt}
\caption{Overview of proposed method. Each encoder produces a weighted point set from each input.
The encoders are optimized with the symmetric InfoNCE using the similarity matrix.}
\label{fig:overview}
\end{figure}

Following CLIP, we use two encoders that transform inputs from each modality.
Instead of one vector in a latent space, the encoders are modified to produce a weighted point set, namely, a set of $M$ pairs
of weight and vector: $\set{(w_i, v_i)}_{i\in\intset{M}}$, where $w_i\in\setR$ and $v_i\in\setR^d$ for each $i\in\intset{M}$.
We define the similarity of two weighted point sets, $\set{(w_i^\mkX, v_i^\mkX)}_{i\in\intset{M^\mkX}}$ and $\set{(w_i^\mkY, v_i^\mkY)}_{i\in\intset{M^\mkY}}$ (containing $M^\mkX$ and $M^\mkY$ pairs of weight and vector, respectively), with a kernel function $\funcdoms{k(\cdot, \cdot)}{\setR^d \times \setR^d}{\setR}$, as follows:
\begin{align}
    g\prn{\set{\prn{w_i^\mkX, v_i^\mkX}}_{i\in\intset{M^\mkX}},
          \set{\prn{w_j^\mkY, v_j^\mkY}}_{j\in\intset{M^\mkY}}}
    &:= \sum_{i,j} w_i^\mkX w_j^\mkY k(v_i^{(\calX)}, v_j^{(\calY)})~. \label{eq:sim_weighted_sets}
\end{align}
This similarity can be regarded as the inner product of high-dimensional representers of a linear combination of Dirac measures \citep{muandet2017kernel}
as
$\sum_{i,j} w_i^\mkX w_j^\mkY k(v_i^{(\calX)}, v_j^{(\calY)})
= \inpr{\sum_i w_i^\mkX k(v_i^\mkX, \cdot)}{\sum_j w_j^\mkY k(v_j^\mkY, \cdot)}_\calH
$.
Here, $\inpr{\cdot}{\cdot}_\calH$ denotes the inner product of the reproducing kernel Hilbert space (RKHS) associated with $k$.
In the following theorem, 
we show that our proposed similarity based on weighted point sets consistently achieves the optimal similarity.
\begin{theorem} \label{th:universality}
    Assume that Assumption \ref{asmp:generation_process} holds. Define a function $g$ as Eq. \ref{eq:sim_weighted_sets} with a bounded $c_0$-universal kernel $\funcdoms{k}{\setR^d \times \setR^d}{\setR}$. Then, for any $\epsilon > 0$, there exist positive integers, $M^\mkX, M^\mkY \in \setN$ and maps, $f_\calX: x \mapsto \set{\prn{w_i^\mkX, v_i^\mkX}}_{i\in\intset{M^\mkX}}$ and $f_\calY: y \mapsto \set{\prn{w_j^\mkY, v_j^\mkY}}_{j\in\intset{M^\mkY}}$ such that
    \begin{align}
        \sup_{x \in \supp p(x), y\in \supp p(y)}
        \abs{g(f_\calX(x), f_\calY(y)) - \ln \frac{p(x,y)}{p(x)p(y)}} < \epsilon.  \label{eq:universality}
    \end{align}
\end{theorem}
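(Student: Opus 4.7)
My plan starts by recognizing that the proposed similarity admits the RKHS inner-product form $\tilde{g}(f_\calX(x), f_\calY(y)) = \langle \mu_x, \nu_y \rangle_\calH$, where $\mu_x = \sum_i w_i^\mkX k(v_i^\mkX, \cdot)$ and $\nu_y = \sum_j w_j^\mkY k(v_j^\mkY, \cdot)$ are finite linear combinations of kernel sections in the RKHS $\calH$ of $k$. The goal then reduces to uniformly approximating the pointwise mutual information $r(x,y) := \ln \frac{p(x,y)}{p(x)p(y)}$ on $\supp p(x) \times \supp p(y)$ by an expression of the form $\langle \mu_x, \nu_y \rangle_\calH$ in which $\mu_x, \nu_y$ have the prescribed finite structure.

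First I would invoke Assumption \ref{asmp:generation_process}---which I expect to make $r$ continuous and bounded on a compact product of supports---and apply the Stone--Weierstrass theorem to the subalgebra of $C(\supp p(x) \times \supp p(y))$ generated by the separated functions $(x,y) \mapsto f(x)$ and $(x,y) \mapsto g(y)$. This subalgebra contains constants and separates points of the product space, hence is uniformly dense. Consequently there exist $L \in \setN$ and continuous $\{a_l\}_{l=1}^L, \{b_l\}_{l=1}^L$ with
\begin{align*}
    \sup_{x,y} \abs{r(x,y) - \sum_{l=1}^L a_l(x) b_l(y)} < \epsilon.
\end{align*}

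Next, I would use the fact that any $c_0$-universal kernel is strictly positive definite, so for any $L$ distinct points $u_1, \dots, u_L \in \setR^d$ the Gram matrix $K := (k(u_i, u_j))_{i,j=1}^L$ is symmetric and invertible. I then set $M^\mkX = M^\mkY = L$, fix $v_i^\mkX(x) = v_i^\mkY(y) = u_i$ independently of the input, and place all input-dependence into the scalar weights: $w^\mkX(x) := K^{-1} \mathbf{a}(x)$ and $w^\mkY(y) := \mathbf{b}(y)$, where $\mathbf{a}(x) := (a_1(x),\dots,a_L(x))^\top$ and $\mathbf{b}(y) := (b_1(y),\dots,b_L(y))^\top$. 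A short calculation using $K^\top = K$ then gives
\begin{align*}
    \tilde{g}(f_\calX(x), f_\calY(y))
    = \sum_{i,j} w_i^\mkX(x) w_j^\mkY(y) k(u_i, u_j)
    = (K^{-1} \mathbf{a}(x))^\top K \mathbf{b}(y)
    = \mathbf{a}(x)^\top \mathbf{b}(y)
    = \sum_{l=1}^L a_l(x) b_l(y),
\end{align*}
which is within $\epsilon$ of $r(x,y)$ uniformly, yielding Eq.~\ref{eq:universality}.

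I expect the main obstacle to be this first step: certifying the uniform finite-rank tensor approximation of the PMI. This critically depends on Assumption \ref{asmp:generation_process} supplying continuity of $r$ together with compactness of the supports; in particular, one needs some control ensuring that $p(x,y)/(p(x)p(y))$ is bounded away from $0$ on $\supp p(x) \times \supp p(y)$, otherwise $r$ blows up to $-\infty$ and no uniform bound is possible. By contrast, the $c_0$-universality of $k$ enters only as an auxiliary ingredient in the explicit construction above, used merely to invert the Gram matrix; strict positive definiteness alone suffices for this route, so the full denseness of $\calH$ in $C_0(\setR^d)$ would become essential only in an alternative approach that instead tries to realize the coefficient functions $a_l, b_l$ themselves inside the RKHS rather than absorbing them into the weights.
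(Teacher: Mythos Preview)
Your overall strategy is viable and genuinely different from the paper's, but there is one concrete gap in how you invoke Assumption~\ref{asmp:generation_process}. The assumption does \emph{not} give you compactness of $\supp p(x)\times\supp p(y)$: part~(c) only asserts that $\supp p_{\tilde X,\tilde Y}\subset\setR^d\times\setR^d$ is compact, and nothing constrains the supports of the nuisance variables $Z^{(\calX)},Z^{(\calY)}$. Hence Stone--Weierstrass cannot be applied directly on $C(\supp p(x)\times\supp p(y))$. The paper deals with this by first using parts~(a)--(b) to show $\ln\frac{p(x,y)}{p(x)p(y)}=\PMI_{\tilde X,\tilde Y}(\tilde x(x),\tilde y(y))$, where $\tilde x(x),\tilde y(y)$ are the latent coordinates extracted from $h_\calX^{-1},h_\calY^{-1}$; all analysis then takes place on the compact set $\supp p_{\tilde X}\times\supp p_{\tilde Y}$. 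Your argument is easily repaired the same way: run Stone--Weierstrass on $C(\supp p_{\tilde X}\times\supp p_{\tilde Y})$ to obtain $\sum_l a_l(\tilde x)b_l(\tilde y)$, then set $w^\mkX(x)=K^{-1}\mathbf a(\tilde x(x))$ and $w^\mkY(y)=\mathbf b(\tilde y(y))$. The Lipschitz hypothesis~(d) supplies the continuity you need on that compact product (your worry about the ratio vanishing is legitimate and is implicitly swept under the rug in the paper as well).

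With that fix, your route diverges substantially from the paper's. The paper freezes $\tilde y$, uses $c_0$-universality (via density of finite kernel combinations in $\calH$ and of $\calH$ in $C_0$) to approximate the slice $\PMI(\cdot,\tilde y)$ by $\sum_j c_j(\tilde y)k(\cdot,\tilde\eta_j(\tilde y))$, then covers $\supp p_{\tilde Y}$ by finitely many balls and uses the Lipschitz bound to patch the slice approximations together; this yields an asymmetric construction with $M^\mkX=1$ and $v_1^\mkX=\tilde x(x)$. Your construction is symmetric, uses fixed anchors $u_1,\dots,u_L$ independent of the input, and needs from the kernel only strict positive definiteness (to invert the Gram matrix), which is strictly weaker than $c_0$-universality. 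So your argument actually shows the theorem holds for a larger class of kernels, at the cost of not exhibiting the ``encoder outputs the latent coordinate'' structure that the paper's construction highlights.
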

The proof and Assumption \ref{asmp:generation_process} are provided in Section \ref{sec:possible_effect_of_proposal}.
The definition of $c_0$-universal kernel is deferred to Definition \ref{def:c0-universal} (refer to \citet{sriperumbudur2011universality}). For example, the Gaussian kernel $k(u,v)= \exp\prn{-\frac{1}{2\sigma^2}\norm{u-v}_2^2}$ and the inverse multiquadric (IMQ) kernel $k(u,v) = \frac{c}{\sqrt{c^2 + \norm{u-v}^2_2}}$ are $c_0$-universal \citep{sriperumbudur2011universality}.

\paragraph{Remark.}
Theorem \ref{th:universality} ensures that the proposed class of similarity is capable of approximating the point mutual information in arbitrary precision.
Unlike the typical class of similarity discussed in Section \ref{sec:limitation_bilinear}, Assumption \ref{asmp:generation_process} does not require the dimension $d$ proportional to the number of intrinsic instances. Instead, it requires $d$ larger than or equal to the intrinsic dimensions of subspaces of $x\in\calX$ and $y\in\calY$ that have dependency on each other. However, we claim that this assumption on $d$ is fairly mild because \textit{the manifold hypothesis} \citep{bengio2013representation} is commonly assumed. Although increasing $M^\mkX$ and $M^\mkY$ also leads to heavy computation, at least it provides a different approach to augmenting representation models than just increasing the number of feature dimensions.
\subsection{Implementation} \label{sec:implementation}
\begin{figure}[t]
\begin{center}
\includegraphics[width=1.\linewidth]{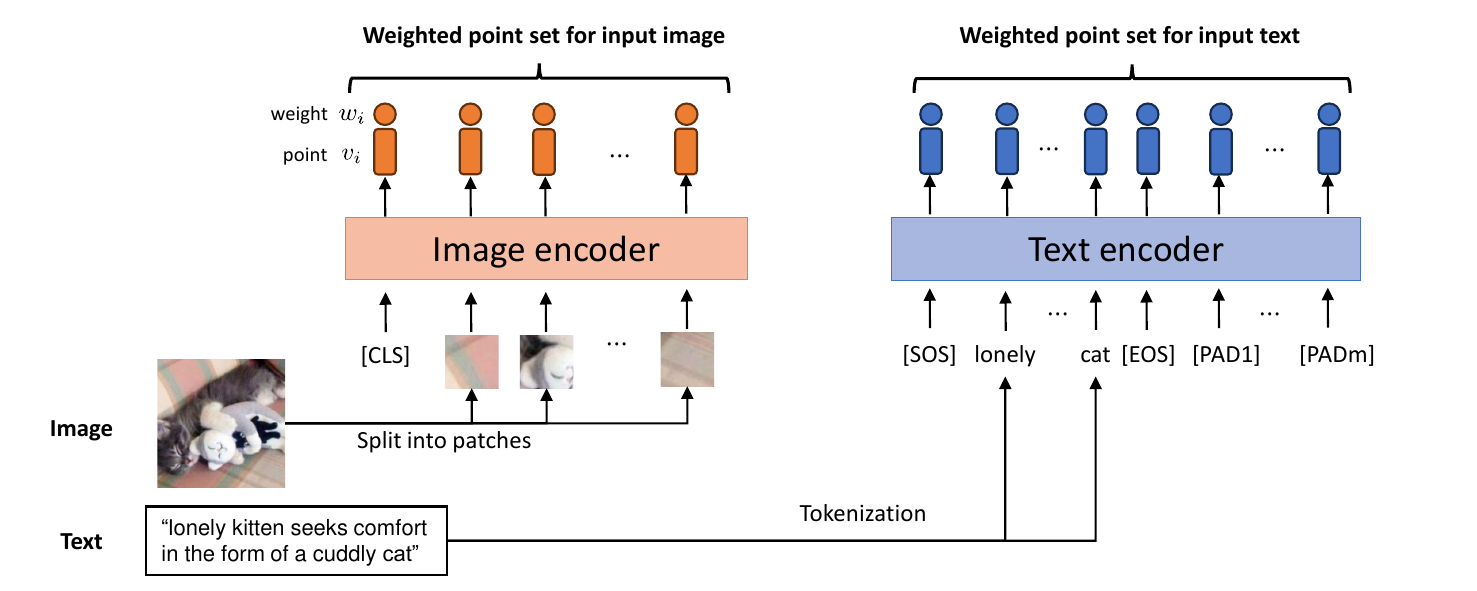}
\end{center}
\vspace{-12pt}
\caption{Proposed modification for encoders to produce a weighted point set. Encoders are modeled by Transformer. The encoders output all resultant vectors instead of just one vector at a certain position.}
\label{fig:architecture}
\end{figure}
To produce a weighted point set from each input, we utilize the structure of transformers, without any significant change to the model size or computation time (Fig. \ref{fig:architecture}). We use Vision Transformer \citep{dosovitskiy2020image} for the image encoder and Transformer \citep{vaswani2017attention} for the text encoder.
A typical Vision Transformer takes projected patches of an image and the special token [CLS], applies attention layers and the last projection layer to the token sequence, and outputs the vector at the position of the [CLS] token.
To output additional weights, we add a projection layer for weights in parallel with that for vectors.
Moreover, to output a sets of weights and vectors, our image encoder outputs all resultant vectors.
In the same way, we modify the text encoder to output all resultant weights and vectors instead of just the vector at the position of a special token [EOS]. 
In addition, we modify the special tokens of the text encoder for padding, [PAD], to be dependent on its relative position to the [EOS] in order to avoid repeating the same tokens.
\tofinal{More specifically, separate padding tokens [PAD1], [PAD2], etc., are appended after the [EOS] until the token sequences reach a fixed length. A learnable embedding is independently initialized for each token.}

For the kernel function, we opted to use a linear combination of the linear kernel and a nonlinear kernel $\Tilde{k}$ with coefficients $\alpha_1, \alpha_2 \in \setRp$: $k(u, v) = \alpha_1 u^\top v + \alpha_2 \Tilde{k}(u, v)$.
In preliminary experiments, we found that when the model was trained only with a nonlinear kernel (i.e., $(\alpha_1, \alpha_2) = (0, 1)$) the symmetric InfoNCE loss did not decrease well nor converge. We consider this was possibly because of the gradient vanishing for points that were far away from each other.
To avoid $O(M^{(\calX)} M^{(\calY)})$ times computation of the kernel, we use random Fourier features (RFFs) \citep{rahimi2007random} for approximating the nonlinear kernel.
When the kernel $\tld{k}$ is shift-invariant, RFF approximates the kernel $\tld{k}(u, v)$ by the inner product of two $D$-dimensional vectors, i.e. $z(u)^\top z(v) \approx \tld{k}(u,v)$.
$z(v) \in \setR^D$ is constructed \tofinal{using random samples $\omega_t \in \setR^d$ and $\beta_t \in \setR ~ (t=1,\dots,D)$ from predefined distributions} (see Appendix \ref{sec:appendix_implementation} for details).
By taking the weighted sum of RFFs, $\overline{z} := \sum_i w_i z\prn{v_i}$, calculated from points in the point set, we obtain an embedding of the weighted point set.
More rigorously, this can be regarded as the embedding in the RKHS of a linear combination of Dirac measures where the RFF approximation is applied to obtain finite dimensional representations of the embeddings.
In our implementation, we concatenate the weighted sum of points, $\overline{v}:=\sum_i w_i v_i$, and that of RFFs $\overline{z}$, using coefficients $\alpha_1$ and $\alpha_2$ as follows:
$\sqbr{\sqrt{\alpha_1} \overline{v}^\top, \sqrt{\alpha_2} \overline{z}^\top}^\top$.
We use it as an embedding of weighted point sets because we can obtain an unbiased estimator of the similarity in Eq. (\ref{eq:sim_weighted_sets}) by simply taking the inner product of embeddings:
\begin{align}
\sum_{i,j} w_i^\mkX w_j^\mkY k(v_i^\mkX, v_j^\mkY)
&\approx \sum_{i,j} w_i^\mkX w_j^\mkY \prn{\alpha_1 v_i^{\mkX\top}v_j^{\mkY} + \alpha_2 z(v_i^\mkX)^\top z(v_j^\mkY)} \nonumber \\
&= \begin{bmatrix}
    \sqrt{\alpha_1}\overline{v}^\mkX \\
    \sqrt{\alpha_2} \overline{z}^\mkX
\end{bmatrix}^\top \begin{bmatrix}
    \sqrt{\alpha_1}\overline{v}^\mkY \\ \sqrt{\alpha_2}\overline{z}^\mkY
\end{bmatrix}.
\end{align}
\tofinal{It is worth noting that the dimension $D$ of RFFs can be changed between training and inference, which affects the kernel approximation error. For example, a larger $D$ can be used during pretraining to achieve a lower variance in approximation, and a smaller $D$ can be used during inference to reduce computational cost and memory usage.}

\section{Experiments} \label{sec:experiments}
\subsection{Pretraining}
To investigate the performance of the representation based on weighted point sets, Weighted Point Set Embedding (WPSE), we conducted experiments in which we trained a text-image representation model.
We utilized Conceptual Captions 3M (CC3M) \citep{sharma-etal-2018-conceptual} and Conceptual Captions 12M (CC12M) \citep{changpinyo2021conceptual} as datasets for pretraining.
As the base architecture of the image encoder, we adopted ViT-B/16 \citep{dosovitskiy2020image}.
Following  SLIP \citep{mu2022slip},
we used the smallest text Transformer model from CLIP.
We modified the image encoder and the text encoder to produce weighted point sets (as explained in Section \ref{sec:implementation}).
As a nonlinear kernel $\Tilde{k}$, we used the Gaussian kernel and the IMQ kernel.
We performed hyperparameter search over $\sigma$ \tofinal{(for the Gaussian kernel)} and $c$ \tofinal{(for the IMQ kernel)} in the range of $\set{0.5, 0.75, 1.0}$.
We also ran a hyperparameter search on the coefficients $(\alpha_1, \alpha_2)$ for combination kernels.
We searched $(\alpha_1, \alpha_2) = (0.667, 0.333), (0.6, 0.4), (0.5, 0.5), (0.4, 0.6), (0.333, 0.667)$. In the tables, we report the performance of the best model from the hyperparameter search.
During the pretraining, we set the dimension $D$ of RFFs to 1024. For each batch, new $\omega_t$ and $\beta_t$ for RFFs were sampled during the pretraining.
For comparison, we also trained typical CLIP models from scratch.
For more training details, see Appendix \ref{sec:appendix_pretraining}.

\subsection{Zero-Shot Transfer}  \label{sec:zero-shot_eval}
\begin{table}[t]
\caption{Zero-shot classification performance. We report the mean per-class accuracy (\%) on Caltech-101, Aircraft, Flowers, and Pets. On other datasets, we report the top-1 accuracy (\%).
}
\label{tb:zero-shot}
\begin{center}
\begin{scriptsize}
\tabcolsep = 3pt
\begin{tabular}{cccccccccccccccc}
\toprule
 & Model & \rotatebox{90}{Average} &
\rotatebox{90}{ImageNet} & \rotatebox{90}{\tiny CIFAR-10} & \rotatebox{90}{\tiny CIFAR-100} & \rotatebox{90}{STL-10} & \rotatebox{90}{Food-101} & \rotatebox{90}{\tiny Caltech-101} & \rotatebox{90}{Cars} & \rotatebox{90}{Aircraft} &
\rotatebox{90}{Flowers} & \rotatebox{90}{EuroSAT} &
\rotatebox{90}{DTD} & \rotatebox{90}{Pets} &
\rotatebox{90}{SUN397} \\
\midrule
\multirow{3}{*}{CC3M} & CLIP & 25.03 & 19.94 & 59.25 & 22.48 & 75.24 & 13.05 & 47.20 & 1.11 & 1.38 & \textbf{13.11} & 10.40 & 13.56 & \textbf{14.62} & 34.06 \\
 &  WPSE Gaussian & 26.75 & 21.20 & 59.95 & 23.58 & 80.61 & \textbf{14.56} & \textbf{51.18} & \textbf{1.49} & 1.35 & 12.60 & 19.98 & 13.40 & 13.60 & \textbf{34.16} \\
 & WPSE IMQ & \textbf{27.04} & \textbf{21.36} & \textbf{61.22} & \textbf{25.91} & \textbf{81.64} & 13.17 & 50.15 &	1.41 & \textbf{1.84} & 12.14 & \textbf{22.02} & \textbf{13.69} & 13.88 & 33.05 \\

\midrule
\multirow{3}{*}{CC12M} & CLIP & 43.78 & 39.15 & 74.17 & 42.98 & 90.91 & 47.96 & 73.58 & 21.94 & 2.01 & 29.71 & 22.24 & \textbf{22.45} & 52.29 & \textbf{49.72}  \\
 &  WPSE Gaussian & \textbf{46.12} & \textbf{39.95} & \textbf{81.33} & \textbf{49.49} & 91.25 & 50.63 & \textbf{74.66} & \textbf{24.14} & \textbf{2.54} & \textbf{30.11} & 23.28 & 21.17 & \textbf{61.41} & 49.57  \\
 & WPSE IMQ & 45.71 & 39.26 & 80.31 & 47.53 & \textbf{91.83} & \textbf{51.82} & 73.54 & 21.92 &	1.62 & 29.53 & \textbf{28.36} & 21.62 & 57.31 & 49.54 \\
\bottomrule
\end{tabular}
\end{scriptsize}
\end{center}
\vspace{-10pt}
\end{table}

We evaluated the zero-shot classification performance on the following 13 benchmark datasets: ImageNet \citep{russakovsky2015imagenet}, CIFAR-10 \citep{krizhevsky2009learning}, CIFAR-100 \citep{krizhevsky2009learning},
STL-10 \citep{coates2011analysis}, Food-101 \citep{bossard2014food}, Caltech-101 \citep{fei2006one}, Stanford Cars \citep{krause20133d},
FGVC Aircraft \citep{maji2013fine}, Oxford Flowers \citep{nilsback2008automated}, EuroSAT \citep{helber2019eurosat}, Describable Textures Dataset (DTD) \citep{cimpoi2014describing}, Oxford Pets \citep{parkhi2012cats}, and SUN397 \citep{xiao2010sun}.
Following SLIP \citep{mu2022slip}, we adopted prompt ensembling and utilized prompts provided by SLIP for each dataset. We set the dimension $D$ of RFFs to 512. $\omega_t$ and $\beta_t$ for RFFs were fixed before the evaluation. To investigate the effect of the randomness of RFFs, we performed five evaluations for the models that use RFFs.
Table \ref{tb:zero-shot} lists the zero-shot classification results, where the results of models using RFFs have been averaged.
Additionally, Table \ref{tb:zero-shot_std} in the Appendix shows the standard deviation. As these findings show, the proposed method outperformed CLIP on average.
In addition, the randomness of RFFs did not have a significant impact on the overall performance.

\subsection{Linear Classification}
\begin{table}
\caption{Linear classification performance.
We report the mean per-class accuracy (\%) on Caltech-101, Aircraft, Flowers, and Pets. On other datasets, we report the top-1 accuracy (\%).
}
\label{tb:linear_probe}
\begin{center}
\begin{scriptsize}
\tabcolsep = 2.5pt
\begin{tabular}{cccccccccccccccc}
\toprule
 & Model & \rotatebox{90}{Average} &
\rotatebox{90}{ImageNet} & \rotatebox{90}{\tiny CIFAR-10} & \rotatebox{90}{\tiny CIFAR-100} & \rotatebox{90}{STL-10} & \rotatebox{90}{Food-101} & \rotatebox{90}{\tiny Caltech-101} & \rotatebox{90}{Cars} & \rotatebox{90}{Aircraft} &
\rotatebox{90}{Flowers} & \rotatebox{90}{EuroSAT} &
\rotatebox{90}{DTD} & \rotatebox{90}{Pets} &
\rotatebox{90}{SUN397} \\
\midrule
\multirow{6}{*}{CC3M} & CLIP
 &   67.00 &   51.42 &   85.51 &   64.87 &   91.71 &   61.71 &   79.24 &   27.27 &   31.81 &   86.67 &   93.82 &    63.19 &   66.48 &   67.35 \\
 & WPSE Gaussian & \textbf{69.01}	&56.18	&85.00	&\textbf{65.10}	&\textbf{92.20}	&63.71	&79.97	&\textbf{30.68}	&\textbf{37.85}	&\textbf{88.63}	&\textbf{94.94}	&\textbf{64.15}	&\textbf{69.08}	&\textbf{69.64} \\
 & WPSE IMQ &{68.23}	&\textbf{56.77}	&\textbf{85.87}	&63.48	&92.06	&\textbf{64.12}	&\textbf{80.78}	&27.16	&33.98	&87.47	&93.72	&63.14	&69.06	&69.35\\ \cmidrule{2-16}
 & CLIP (bef) & 72.14&	58.33&	87.90&	 70.05&	92.64&	66.07&	82.49&	39.46&	44.96&	91.48&	\textbf{96.02}&	67.02&	71.86&	69.56\\
 & WPSE Gaussian (bef) &73.77	&\textbf{61.19}	&87.94	&\textbf{70.36}	&\textbf{92.70}	&\textbf{69.37}	&84.03	&\textbf{44.50}	&\textbf{47.93}	&\textbf{92.10}	&95.86	&\textbf{67.71}	&74.10	&\textbf{71.21} \\
 & WPSE IMQ (bef) &\textbf{73.81}	&61.02	&\textbf{88.44}	&70.10	&92.68	&68.84	&\textbf{84.39}	&43.90	&47.66	&91.98	&95.92	&67.61	&\textbf{75.94}	&71.10\\
\midrule
\multirow{6}{*}{CC12M} & CLIP & 77.89 & 65.15 & {91.06} & {71.75} & 95.34 & 77.47 & 87.24 & 64.53 & 41.93 & 92.50 & {94.32} & \textbf{72.98} & 81.71 & 76.55 \\
  & WPSE Gaussian & \textbf{79.08}	&\textbf{67.83}	&\textbf{91.72}	&\textbf{73.06}	&96.46	&\textbf{79.49}	&\textbf{89.18}	&65.23	&\textbf{44.53}	&92.09	&94.58	&72.93	&\textbf{83.56}	&\textbf{77.42} \\
 & WPSE IMQ &78.90 &67.11 &91.14 &72.59 &\textbf{96.48} &78.69 &88.85 &\textbf{66.32} &44.17 &\textbf{92.61} &\textbf{94.70} &72.55 &83.21 &77.31\\ \cmidrule{2-16}
 & CLIP (bef) & 81.03	&69.15	&92.04	&75.99	&95.40	&80.13	&90.09	&70.86	&53.72	&94.85	&\textbf{96.64}	&74.89	&81.81	&77.75\\
 & WPSE Gaussian (bef) &82.52 &\textbf{70.94} &93.00 &77.16 &96.53 &\textbf{81.76} &\textbf{91.65} &74.28 &55.61 &95.22 &96.30 &\textbf{76.22} &85.51 &\textbf{78.62}\\
 & WPSE IMQ (bef) &\textbf{82.71} &70.90 &\textbf{93.04} &\textbf{77.27} &\textbf{96.60} &81.58 &91.06 &\textbf{75.53} &\textbf{58.10} &\textbf{95.60} &96.28 &75.43 &\textbf{85.65} &78.19\\
\bottomrule
\end{tabular}
\end{scriptsize}
\end{center}
\vspace{-10pt}
\end{table}

We also performed the linear classification evaluation where we trained linear classifiers on the embedding vectors obtained by frozen pretrained image encoders. We used the same 13 benchmarks as the zero-shot classification.
To extract embeddings for training linear classifiers, we used two different settings.
In the first setting, we used the embeddings that were used for computation of the similarity in the symmetric InfoNCE. We set $D$ for RFFs to 512. $\omega_t$ and $\beta_t$ were fixed before the evaluation. Based on the robustness of the RFFs shown in Table \ref{tb:zero-shot_std}, we did not evaluate multiple settings of $\omega_t$ and $\beta_t$.
In the second setting, following common practice \citep{Chen_2021_ICCV}, we used the intermediate latent vectors just before the last projection layer of the image encoder. We denote this setting as ``(bef)'' in tables. For our WPSE models, weighted sum of the latent vectors with weights in the output weighted point set was used, and no RFF was used.

We basically followed the evaluation procedure in \citet{NEURIPS2022_cloob}. We used a logistic regression classifier with an L-BFGS optimizer \citep{liu1989limited} and the maximum number of iteration of 1000. We utilized the implementation from cuML \citep{raschka2020machine}.
For hyperparameter tuning of the L2 regularization
cost, we followed the protocol of CLIP \citep{radford2021learning}. We ran hyperparameter sweeps over $C \in [10^{-6}, 10^6]$ with a parametric binary search on a validation split of each dataset. For datasets that do not provide an official validation split, we randomly split the training dataset into training and validation splits. After the hyperparameter was determined, we trained a classifier 
on the combination of training and validation splits and report its performance on the test split.
Table \ref{tb:linear_probe} lists the linear classification results. Overall, our proposed method outperformed CLIP on average.

\subsection{Ablation study} \label{sec:ablation_cc3m}
\begin{wraptable}{r}{0.5\textwidth}
\begin{minipage}{0.5\textwidth}
\vspace{-15pt}
\caption{Ablation study. Models are trained on CC3M. Except for WPSE Linear, IMQ kernel was used.}
\label{tb:ablate}
\begin{center}
\begin{footnotesize}
\tabcolsep = 2.5pt
\begin{tabular}{ccc}
\toprule
Model   & Zero-shot & Linear \\ \midrule
WPSE  & 27.04 & 68.23 \\
WPSE with positive weights & 4.22 & -- \\
WPSE Linear & 27.25 & 67.40 \\
\bottomrule
\end{tabular}
\end{footnotesize}
\end{center}
\end{minipage}
\end{wraptable}
To investigate the effectiveness of our similarity, we trained two variant models that output weighted point sets on CC3M.
One model outputs weighted point sets but the all weights are positive: $w_i\geq0$. We used a function $100\mathrm{Sigmoid}(\cdot/100)$ as the last activation for weights in encoders. We denote this model as WPSE with postive weights.
The other model also outputs weighted point sets but the similarity of weighted point sets are calculated only with linear kernel, i.e., the coefficients $(\alpha_1, \alpha_2)$ are set to $(1, 0)$. We denote this model as WPSE Linear.
We also trained the model with the coefficients $(\alpha_1, \alpha_2) = (0, 1)$, specifically only with the nonlinear kernel. However, the training of this model failed due to a NaN loss error.
Table \ref{tb:ablate} shows the average performance of zero-shot classification and linear classification on the 13 benchmark datasets. For WPSE with positive weights, we used the same parameters for the combination kernel. This indicates that negative weights are crucial for a good performance. (We did not perform linear classifications for WPSE with positive weights.) In comparison to WPSE Linear, it indicates the superiority of the use of non-linear kernel in the linear classification tasks.
We also show the result of ablation study using CC12M in Appendix \ref{sec:ablation_cc12m}.

\section{Conclusion}

We proposed a multimodal representation learning with weighted point sets. In our method, each input is transformed by an encoder into a weighted point set representation. The similarity between two weighted point sets is calculated with a kernel function that defines the similarity of two points. 
We also showed the theoretical benefits of using our representation and similarity.
We highlighted that the optimal similarity of the symmetric InfoNCE is represented by the pointwise mutual information and showed that we can construct a linear classifier close to the optimal classifier of downstream tasks that is possibly nonlinear when the optimal similarity is obtained. In addition, we clarified the effect on the performance of downstream tasks caused by the deviation of the obtained similarity from the pointwise mutual information, and explained that the deviation of the similarity can be suppressed when using the proposed similarity based on weighted point sets.
Experiments on text-image datasets demonstrated the superior performance of the proposed method compared to baselines.

\subsubsection*{Ethics Statement}
In conducting this research on representation learning models, we are committed to upholding ethical standards.
Our work aims to contribute to machine learning research society by theoretical analysis of representation learning and enhancing the capability of representations.
However, we recognize potential concerns of representation learning models, such as biases in training datasets, license issues of scraped datasets and harmful applications.
We acknowledge that representation learning models can have significant impacts on society.
Therefore, we commit ourselves to ensuring that our research activity positively contributes to society while avoiding harm.

\subsubsection*{Reproducibility Statement}
Detailed descriptions of our setup of the algorithm and experiments can be found in Section \ref{sec:implementation}, \ref{sec:experiments}, and Appendix \ref{sec:additional_details}.
\tofinal{Moreover, we release our code at \url{https://github.com/sony/wpse} to ensure reproducibility.}

\subsubsection*{Acknowledgements}
Computational resource of AI Bridging Cloud Infrastructure (ABCI) provided by National Institute of Advanced Industrial Science and Technology (AIST) was used.
TS was partially supported by JSPS KAKENHI (20H00576) and JST CREST (JPMJCR2015).
\tofinal{We would like to extend our thanks to Wei-Hsiang Liao and Bac Nguyen from Sony AI for the valuable feedback.
}

\bibliography{main}
\bibliographystyle{iclr2025_conference}

\appendix
\newpage
\section{Additional Details of implementation and experiments} \label{sec:additional_details}
\subsection{Implementation}
\label{sec:appendix_implementation}
\paragraph {Random Fourier feature (RFF)}
Random Fourier feature \citep{rahimi2007random} is a technique for reducing computational complexity of kernel methods.
For a shift-invariant kernel $k(u, v) = k(u-v)$ on $\setR^d$ such that $k(0)=1$, there exists a probability distribution, $p_\omega$, of a random variable, $\omega\in\setR^d$ that satisfies:
\begin{align*}
    k(u-v)=\Expect{\omega, \beta}{2 \cos(\omega^\top u + \beta) \cos(\omega^\top v + \beta)},
\end{align*}
where $\beta\in\setR$ is sampled from a uniform distribution, $\mathrm{Unif}[0,2\pi]$, over $[0, 2\pi]$.
$p_\omega$ is given by the Fourier transform of $k(u-v)$.
Based on this fact, we can construct an unbiased estimator of $k(u,v)$ as follows.
First, $\omega_t\in\setR^d$ and $\beta_t\in\setR ~ (t=1,\cdots,D)$ are independently sampled from the distributions $p_\omega$ and $\mathrm{Unif}[0, 2\pi]$, respectively. Then, a vector $z(v) \in \setR^D$ is constructed from $v\in\setR^d$, $\curl{\omega_t}_{t=1}^D$, and $\curl{\beta_t}_{t=1}^D$ as
\begin{align}
 z\prn{v; \curl{\omega_t}_{t=1}^D, \curl{\beta_t}_{t=1}^D}
  = \sqrt{\frac{2}{D}} \bigl[\cos\prn{\omega_1^\top v + \beta_1}, \dots, \cos\prn{\omega_D^\top v + \beta_D} \bigl]^\top.
\end{align}
Similarly, $z(u)$ is constructed from $u$ with the same $\curl{\omega_t}_{t=1}^D$ and $\curl{\beta_t}_{t=1}^D$.
Last, an estimator of $k(u,v)$ is obtained by taking the inner product of the vectors: $\mathbb{E}\sqbr{z(u)^\top z(v)} = k(u,v)$.
For the specific form of $p_\omega$ for Gaussian kernel and IMQ kernel, and further details, see Appendix C in \citet{li2021self}.
Algorithm \ref{alg:KME-CLIP} shows a pseudocode for computing our proposed similarity for symmetric InfoNCE.
\begin{algorithm}[t]
   \footnotesize
   \caption{Symmetric InfoNCE loss with the similarity of weighted point sets}
   \label{alg:KME-CLIP}
\begin{algorithmic}[1]
   \REQUIRE an image encoder $f_\calX$, a text encoder $f_\calY$, a batch of $B$ paired images and texts $\set{\prn{x_b, y_b}}_{b=1}^B$,
   the distribution $p_\omega$ associated with the shift-invariant kernel $\tld{k}$,
   coefficients $\alpha_1$ and $\alpha_2$, and a temperature $\tau$.

   \STATE $\curl{
   \prn{w^{(\calX)}_{b1},v^{(\calX)}_{b1}}, \dots, \prn{w^{(\calX)}_{bM^{(\calX)}}, v^{(\calX)}_{bM^{(\calX)}}}
   } \gets f_\calX(x_b)$ for each $b\in\intset{B}$
   \STATE $\curl{
   \prn{w^\mkY_{b1}, v^\mkY_{b1}}, \dots, \prn{w^\mkY_{bM^\mkY}, v^\mkY_{bM^\mkY}}
   } \gets f_\calY(y_b)$ for each $b\in\intset{B}$
   
   \STATE $\overline{v}_b^\mkX \gets \sum_{i=1}^{M^\mkX} w_{bi}^\mkX v_{bi}^\mkX$
   \STATE $\overline{v}_b^\mkY \gets \sum_{j=1}^{M^\mkY} w_{bj}^\mkY v_{bj}^\mkY$
   \STATE Draw $D$ i.i.d. samples $\omega_1,\dots,\omega_D$ from $p_\omega$.
   \STATE Draw $D$ i.i.d. samples $\beta_1,\dots,\beta_D$ from $\mathrm{Unif}[0,2\pi)$.
   \STATE $\overline{z}_b^\mkX \gets \sum_{i=1}^{M^\mkX} w_{bi}^\mkX z\prn{v_{bi}^\mkX; \curl{\omega_t}_{t=1}^D, \curl{\beta_t}_{t=1}^D}$ for each $b\in\intset{B}$
   \STATE $\overline{z}_b^\mkY \gets \sum_{j=1}^{M^\mkY} w_{bj}^\mkY z\prn{v_{bj}^\mkY; \curl{\omega_t}_{t=1}^D, \curl{\beta_t}_{t=1}^D}$ for each $b\in\intset{B}$
   \STATE $S_{bb'} \gets \tau^{-1} \prn{\alpha_1\overline{v}_b^{(\calX)\top} \overline{v}_{b'}^{(\calY)} + \alpha_2 \overline{z}_b^{(\calX)\top} \overline{z}_{b'}^{(\calY)}}$ for each $b, b'\in\intset{B}$
   \STATE Compute the symmetric InfoNCE loss from the similarity matrix $\curl{S_{bb'}}_{bb'}$.

\end{algorithmic}
\end{algorithm}

\paragraph{Model architecture}
In addition to the modifications in Section \ref{sec:implementation}, we modify Transformer encoders as follows.
To stabilize training, we add an activation function, $100\tanh\prn{\cdot/100}$, after the projection layer for weights for restricting the range of weights.
In preliminary experiments, we found that model parameters diverged during pretraining without it.
Following CLIP, we apply L2-nomalization to points $v_i~(i\in\intset{M})$ in weighted point sets, and use an inverse temperature parameter $\tau^{-1}$ to scale the similarity of weighted point sets (Algorithm \ref{alg:KME-CLIP}).
In typical CLIP implementations, $\tau^{-1}$ is calculated by an exponential activation as $\tau^{-1} = \exp(\theta)$ with a learnable parameter $\theta$ and clipping to a certain range, such as $[1, 100]$. However, in preliminary experiments, we found that $\tau^{-1}$ increased rapidly to the maximum value in the beginning of pretraining when we use the exponential activation and the weighted point set similarity, and that it harmed model performance. Therefore, we remove the exponential activation and use $\tau^{-1} = \theta$ to scale the proposed similarity.
The range for clipping is set to $[1, 100]$.
\subsection{Pretraining} \label{sec:appendix_pretraining}
The text Transformer model we used is a 12-layer 512-wide transformer with eight attention heads.
We utilized a byte pair encoding (BPE) tokenizer with a vocabulary size of 49K and a maximum context length of 77.
Based on the Transformer architectures, we set $M^\mkX, M^\mkY$, and $d$ of the weighted point sets to 197, 77, and 512, respectively.
As a data augmentation, images were randomly resized and cropped with a scaling factor between 0.5 and 1.0 and bicubic interpolation.
Models were trained for 50 epochs on CC3M and for 35 epochs on CC12M. We set the batch size to 2048.
We used the AdamW optimizer with a beta2 of 0.98 and cosine scheduling with a linear warmup in pretraining. We set the initial learning rate to 0.0005 and used weight decay of 0.5.
We used the built-in automatic mixed precision library in PyTorch \citep{paszke2019pytorch}.

\subsection{Classification Evaluations}
\begin{table}[h]
\caption{13 datasets used for classification evaluations.}
\label{tb:datasets}
\begin{center}
\begin{footnotesize}
\begin{tabular}{lcccc}
\toprule
Dataset         &  Classes & Train & Val & Test \\
\midrule
ImageNet \citep{russakovsky2015imagenet}    & 1000 & 1153051 & 128116 & 50000 \\
CIFAR-10 \citep{krizhevsky2009learning}    & 10 & 45000 & 5000 & 10000  \\
CIFAR-100 \citep{krizhevsky2009learning}   & 100 & 45000 & 5000 & 10000 \\
STL-10 \citep{coates2011analysis}      & 10 & 4500 & 500 & 8000 \\
Food-101 \citep{bossard2014food}    & 101 & 68175 & 7575 & 25250 \\
Caltech-101 \citep{fei2006one} & 102 & 2754 & 306 & 6085 \\
Stanford Cars \citep{krause20133d} & 196 & 7330 & 814 & 8041 \\
FGVC Aircraft \citep{maji2013fine} & 100 & 3334 & 3333 & 3333  \\
Oxford Flowers \citep{nilsback2008automated} & 102 & 1020 & 1020 & 6149 \\
EuroSAT \citep{helber2019eurosat} & 10 & 9000 & 1000 & 5000 \\
DTD \citep{cimpoi2014describing} & 47 & 1880 & 1880 & 1880 \\
Oxford Pets \citep{parkhi2012cats} & 37 & 3312 & 368 & 3669 \\
SUN397 \citep{xiao2010sun} & 397 & 76129 & 10867 & 21758 \\
\bottomrule
\end{tabular}
\end{footnotesize}
\end{center}
\end{table}

The properties of the datasets we used in the classification tasks are listed in Table \ref{tb:datasets}.
In Table \ref{tb:zero-shot_std}, we show the results of the same zero-shot classification as presented in Section \ref{sec:zero-shot_eval} but with the standard deviation included. 
\begin{table}[h]
\caption{Zero-shot classification performance.}
\label{tb:zero-shot_std}
\begin{center}
\begin{footnotesize}
\begin{tabular}{ccccc}
\toprule
 & \multicolumn{2}{c}{CC3M} & \multicolumn{2}{c}{CC12M} \\ \cmidrule(lr){2-3} \cmidrule(lr){4-5}
Dataset   & WPSE Gaussian & WPSE IMQ & WPSE Gaussian & WPSE IMQ \\
\midrule
ImageNet    &  21.20 $\pm$ 0.05 & 21.36 $\pm$ 0.04 & 39.95 $\pm$ 0.06 & 39.26 $\pm$ 0.06 \\
CIFAR-10    &  59.95 $\pm$ 0.20 & 61.22 $\pm$ 0.51 & 81.33 $\pm$ 0.33 & 80.31 $\pm$ 0.24 \\
CIFAR-100   &  23.58 $\pm$ 0.13 & 25.91 $\pm$ 0.19 & 49.49 $\pm$ 0.16 & 47.53 $\pm$ 0.05 \\
STL-10      &  80.61 $\pm$ 0.37 & 81.64 $\pm$ 0.25 & 91.25 $\pm$ 0.09 & 91.83 $\pm$ 0.13 \\
Food-101    &  14.56 $\pm$ 0.08 & 13.17 $\pm$ 0.05 & 50.63 $\pm$ 0.08 & 51.82 $\pm$ 0.20 \\
Caltech-101 &  51.18 $\pm$ 0.12 & 50.15 $\pm$ 0.10 & 74.66 $\pm$ 0.20 & 73.54 $\pm$ 0.26 \\
Cars        &   1.49 $\pm$ 0.02 &  1.41 $\pm$ 0.08 & 24.14 $\pm$ 0.16 & 21.92 $\pm$ 0.13 \\
Aircraft    &   1.35 $\pm$ 0.12 &  1.84 $\pm$ 0.13 & 2.54  $\pm$ 0.09 & 1.62 $\pm$ 0.15 \\
Flowers     &  12.60 $\pm$ 0.10 & 12.14 $\pm$ 0.15 & 30.11 $\pm$ 0.25 & 29.53 $\pm$ 0.26 \\
EuroSAT     &  19.98 $\pm$ 0.18 & 22.02 $\pm$ 0.92 & 23.28 $\pm$ 0.36 & 28.36 $\pm$ 0.38 \\
DTD         &  13.40 $\pm$ 0.24 & 13.69 $\pm$ 0.13 & 21.17 $\pm$ 0.23 & 21.62 $\pm$ 0.28 \\
Pets        &  13.60 $\pm$ 0.18 & 13.88 $\pm$ 0.16 & 61.41 $\pm$ 0.15 & 57.31 $\pm$ 1.61 \\
SUN397      &  34.16 $\pm$ 0.11 & 33.05 $\pm$ 0.15 & 49.57 $\pm$ 0.11 & 49.54 $\pm$ 0.29 \\
\bottomrule
\end{tabular}
\end{footnotesize}
\end{center}
\end{table}

\subsection{Ablation study on CC12M} \label{sec:ablation_cc12m}
\begin{wraptable}{r}{0.5\textwidth}
\begin{minipage}{0.5\textwidth}
\caption{Ablation study. Models are trained on CC12M. Except for WPSE Linear, Gaussian kernel was used.}
\label{tb:ablate_cc12m}
\begin{center}
\begin{footnotesize}
\tabcolsep = 2.5pt
\begin{tabular}{ccc}
\toprule
Model   & Zero-shot & Linear \\ \midrule
WPSE  & 46.12 & 79.08 \\
WPSE Nonlinear & 44.05 & 70.99 \\
WPSE Linear & 45.87 & 78.61 \\
\bottomrule
\end{tabular}
\end{footnotesize}
\end{center}
\end{minipage}
\end{wraptable}
In this section, we present the result of ablation study using CC12M.
We trained two variant models that output weighted point sets.
One model is a WPSE Linear, which we described in Section \ref{sec:ablation_cc3m}, with the coefficients $(\alpha_1, \alpha_2) = (1,0)$ and the similarity of weighted point sets are calculated only with linear kernel.
The other model has the coefficients $(\alpha_1, \alpha_2) = (0,1)$ and calculates the similarity of weighted point sets using only a nonlinear kernel. We denote this model as WPSE Nonlinear.
Additionally, we also trained a WPSE with postive weights with the last sigmoid activation in the same manner as described inSection \ref{sec:ablation_cc3m}. However, the training of this model failed due to a NaN loss.
Table \ref{tb:ablate_cc12m} shows the average performance of zero-shot classification and linear classification on the 13 benchmark dataset. This indicates that the combination of the linear kernel and a nonlinear kernel is beneficial for the performance.

\section{Proofs of statements in Section \ref{sec:theoretical_guarantee}}
\subsection{Proof of Theorem \ref{th:excess_loss_bound}}  \label{sec:proof_excess_loss_bound}
\begin{proof}
From the definition of $\bar{h}^g(x)$, the $c$-th entry of $\bar{h}^{g^*}(x)$ is calculated as follows:
\begin{align*}
    \bar{h}^{g^*} (x)_c &= \prn{\Expect{p(y|\calY_c)}{\frac{1}{\tau^*} f^*_\calY(y)} }^\top f^*_\calX(x) + \ln P(\calY_c) \\
    &= \Expect{p(y|\calY_c)}{\frac{1}{\tau^*} f^*_\calY(y)^\top f^*_\calX(x)} + \ln P(\calY_c) \\
    &= \Expect{p(y|\calY_c)}{g^*(x,y)} + \ln P(\calY_c) \\
    &= \Expect{p(y|\calY_c)}{\ln \frac{p(x,y)}{p(x)p(y)}} + \ln P(\calY_c) + \Gamma. 
\end{align*}
Since adding a constant to all entries of $h(x)$ doesn't change the supervised loss $\calL_\SUP(h)$, we consider $\Gamma=0$ for the sake of simplicity. The $c$-th entry of $\bar{h}^{g^*}(x)$ is further rearranged as follows:
\begin{align*}
    \bar{h}^{g^*} (x)_c 
    &= \Expect{p(y|\calY_c)}{\ln \frac{p(x,y)}{p(x)p(y)}} + \ln P(\calY_c) \\
    &= \Expect{p(y|\calY_c)}{\ln \frac{p(x,y)p(x)P(\calY_c)}{p(x)p(y)p(x,\calY_c)} + \ln\frac{p(x,\calY_c)}{p(x)P(\calY_c)}} + \ln P(\calY_c) \\
    &= \Expect{p(y|\calY_c)}{\ln \frac{p(x,y) / p(x,\calY_c)}{p(y) / P(\calY_c)}} + \ln \frac{p(x, \calY_c)}{p(x)} \\
    &= \Expect{p(y|\calY_c)}{\ln \frac{p(y|x,\calY_c)}{p(y|\calY_c)}} + \ln P(\calY_c | x) \\
    &= \ln P(\calY_c | x) - \KL{\tofinal{p_Y(Y|\calY_c)}}{\tofinal{p_Y(Y|x,\calY_c)}}.
\end{align*}

Therefore, we have 
\begin{align*}
    &\calL_\mathrm{sup}(\bar{h}^{g*}) - \calL_\mathrm{sup}(h^*) \\
    &= \Expect{p(x,c)}{\ln P(c|x) - \bar{h}^{g*}(x)_c +  \ln \prn{\sum_i \exp \bar{h}^{g*}(x)_i}} \\
    &= \E{p(x,c)}\Biggl[\ln P(c|x) - \ln P(\calY_c | x) + \KL{\tofinal{p_Y(Y|\calY_c)}}{\tofinal{p_Y(Y|x,\calY_c)}}\Biggr. \\
    &\qquad\qquad+ \Biggl.\ln \prn{\sum_i P(\calY_i | x) \cdot \exp \prn{-\KL{\tofinal{p_Y(Y|\calY_i)}}{\tofinal{p_Y(Y|x,\calY_i)}}}} \Biggr] \\
    &\leq \Expect{p(x,c)}{\ln P(c|x) - \ln P(\calY_c | x) + \KL{\tofinal{p_Y(Y|\calY_c)}}{\tofinal{p_Y(Y|x,\calY_c)}} + \ln \prn{\sum_i P(\calY_i|x)}} \\
    &= \Expect{p(x,c)}{\ln p(c|x) - \ln p(\calY_c|x) + \KL{\tofinal{p_Y(Y|\calY_c)}}{\tofinal{p_Y(Y|x,\calY_c)}} + \ln P(\Tilde{\calY}|x)} \\
    &= \Expect{p(x,c)}{\ln \frac{P(c|x)}{P(\calY_c|x) / P(\Tilde{\calY}|x)} +\KL{\tofinal{p_Y(Y|\calY_c)}}{\tofinal{p_Y(Y|x,\calY_c)}}} \\
    &= \Expect{p(x)}{\KL{\tofinal{P_C(C|x)}}{\tofinal{P_C(C \mid x; \prn{\calY_i}_{i\in\intset{K}} )}}} + \Expect{p(x,c)}{\KL{\tofinal{p_Y(Y|\calY_c)}}{\tofinal{p_Y(Y|x,\calY_c)}}}.
\end{align*}
Here, the inequality holds by the monotonicity of $\ln(\cdot)$, the non-negativity of $P(\calY_i|x)$, and the non-negativity of KL divergence. 
\end{proof}

\subsection{Proof of Lemma \ref{lem:error_effect}}  \label{sec:proof_error_effect}
\begin{proof}
For every $i\in\intset{K}$, it holds that
\begin{align*}
    \abs{\bar{h}^g (x)_i - \bar{h}^{g^*}(x)_i}
    &= \abs{\Expect{p(y|\calY_i)}{g(x,y) - g^*(x,y)}} \\
    &\leq \abs{\Expect{p(y|\calY_i)}{\Delta}} \\
    &= \Delta.
\end{align*}

Let $\vsigma_c(z)$ denote the logarithm of the $c$-th entry of the softmax function, i.e., $\vsigma_c(z):=\ln \frac{e^{z_c}}{\sum_{i=1}^K e^{z_i}}$.

\begin{align}
    \abs{\calL_\mathrm{sup}(\bar{h}^{g}) - \calL_\mathrm{sup}(\bar{h}^{g^*})}
    &=\abs{\Expect{p(x,c)}{
        -\ln \frac{\exp \bar{h}^g(x)_c}{\sum_{i=1}^{K} \exp \bar{h}^g(x)_i}
        +\ln \frac{\exp \bar{h}^{g^*}(x)_c}{\sum_{i=1}^{K} \exp \bar{h}^{g^*}(x)_i}
    }} \nonumber \\
    &\leq \Expect{p(x,c)}{ \abs{ -\vsigma_c\prn{\bar{h}^g(x)} + \vsigma_c\prn{\bar{h}^{g^*}(x)} }}  \label{eq:gap_proof1}
\end{align}

$\vsigma_c(z)$ is a differentiable function with respect to $z$, and the partial derivative is given as follows:
\begin{align*}
    \pdif{\vsigma_c}{z_c} &= 1 - \frac{e^{z_c}}{\sum_{i=1}^K e^{z_i}},\\
    \pdif{\vsigma_c}{z_j} &= \frac{- e^{z_j}}{\sum_{i=1}^K e^{z_i}} \quad \mathrm{for} ~ j \neq c.
\end{align*}
By the mean value theorem, there exists $\xi$ on the line segment between $\bar{h}^g(x)$ and $\bar{h}^{g^*}(x)$ such that
\begin{align*}
    -\vsigma_c\prn{\bar{h}^g(x)} + \vsigma_c\prn{\bar{h}^{g^*}(x)} = \nabla \vsigma_c(\xi)^\top \prn{-\bar{h}^g (x) + \bar{h}^{g^*} (x)}.
\end{align*}
Therefore, we have
\begin{align}
    \Expect{p(x,c)}{ \abs{ -\vsigma_c\prn{\bar{h}^g(x)} + \vsigma_c\prn{\bar{h}^{g^*}(x)} }} 
    &= \Expect{p(x,c)}{ \abs{
        \nabla\vsigma_c(\xi)^\top \prn{-\bar{h}^g (x) + \bar{h}^{g^*} (x)}
    }} \nonumber \\
    &\leq \Expect{p(x,c)}{
        \prn{\sum_{i=1}^K{\abs{\pdif{\vsigma_c}{z_i}(\xi)}}}
        \norm{\bar{h}^g (x) - \bar{h}^{g^*} (x)}_\infty
    } \nonumber \\
    &\leq \Expect{p(x,c)}{ 2\Delta} \nonumber \\
    &= 2\Delta. \label{eq:gap_proof2}
\end{align}
Here, the first inequality holds by H\"{o}lder's inequality. At the second inequality, we use
\begin{align*}
    \sum_{i=1}^K{\abs{\pdif{\vsigma_c}{z_i}(\xi)}}
    = 1- \frac{e^{\xi_c}}{\sum_{i=1}^K e^{\xi_i}}
    + \frac{\sum_{i\neq c} e^{\xi_i}}{\sum_{i=1}^K e^{\xi_i}}
    \leq 2.
\end{align*}
Combining Eq. \ref{eq:gap_proof1}, \ref{eq:gap_proof2} finishes the proof.
\end{proof}

\section{Proofs of statements in Section \ref{sec:method}}
\subsection{Limitation of the bilinear similarity}
\begin{proposition} \label{prop:rank_gram}
    Let $A, B \in \setR^{d \times M}$, and $c\in\setR$. Let $J \in \setR^{M\times M}$ denote the matrix in which all entries are 1. Then, we have $\rank (A^\top B - cJ) \leq d+1$.
\end{proposition}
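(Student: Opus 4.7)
The plan is to use subadditivity of matrix rank together with the observation that each of the two summands has small rank individually. Specifically, I would split $A^\top B - cJ$ into two pieces and bound the rank of each.

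First I would note that $A^\top B$ is the product of an $M \times d$ matrix with a $d \times M$ matrix, so its column space is contained in the column space of $A^\top$, which has dimension at most $d$. Hence $\rank(A^\top B) \leq d$. Next, I would write $J = \mathbf{1}\mathbf{1}^\top$ where $\mathbf{1}\in\setR^M$ is the all-ones vector, so $cJ$ is a rank-one matrix (rank zero if $c=0$). Finally, I would invoke the standard inequality $\rank(X + Y) \leq \rank(X) + \rank(Y)$ to conclude
\[
\rank(A^\top B - cJ) \leq \rank(A^\top B) + \rank(cJ) \leq d + 1.
\]

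There is no real obstacle here; the statement is essentially a one-line consequence of rank subadditivity and the factorization structure of $A^\top B$. The only mild subtlety is acknowledging the case $c=0$, in which the bound tightens to $d$ but the claimed inequality still holds.
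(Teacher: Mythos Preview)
Your argument is correct. It differs from the paper's proof in method, though both are one-liners. The paper does not invoke rank subadditivity; instead it exhibits an explicit factorization of $A^\top B - cJ$ as a product of two $(d+1)$-rank matrices. Concretely, it appends a row of $-1$'s to $A$ and a row of $c$'s to $B$ to form $\tilde A,\tilde B\in\setR^{(d+1)\times M}$, checks that $\tilde A^\top \tilde B = A^\top B - cJ$, and concludes from $\rank(\tilde A^\top\tilde B)\le\min(\rank\tilde A,\rank\tilde B)\le d+1$. Your subadditivity argument is the more standard and arguably cleaner route; the paper's version has the minor advantage of producing an explicit low-rank factorization, which ties in with the surrounding discussion of writing the similarity matrix as an inner product of augmented feature vectors.
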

\begin{proof}
    We define $\Tilde{A}, \Tilde{B} \in \setR^{(d+1) \times M}$ as follows:
    \begin{align*}
        \Tilde{A} = \sqbr{\begin{array}{ccc}
                 & A &  \\ \hline
               -1 & \cdots & -1
        \end{array}}, \
        \Tilde{B} = \sqbr{\begin{array}{ccc}
                 & B &  \\ \hline
               c & \cdots & c
        \end{array}}.
    \end{align*}
    Then, we have $\Tilde{A}^\top \Tilde{B} = A^\top B - cJ$. Since $\rank \Tilde{A} \leq d+1$ and $\rank \Tilde{B} \leq d+1$, the statement holds.
\end{proof}

\subsection{Representational capability of the similarity between weighted point sets} \label{sec:possible_effect_of_proposal}
We denote (joint) probability density functions of random variables by using their corresponding letters. For example, we denote the joint probability density function of the random variables $\Tilde{X}, \Tilde{Y}$ and the probability density function of $\Tilde{X}$ as $p_{\Tilde{X},\Tilde{Y}}$ and $p_{\Tilde{X}}$, respectively.

We impose the following assumptions on the generation process of random variables $X\in\calX$ and $Y\in\calY$.
\begin{assumption}[Generation process] \label{asmp:generation_process}
There exist random variables $\Tilde{X},~ \Tilde{Y} \in \setR^d$,~ $Z^{(\calX)} \in \setR^{d_\calX}$ and $Z^{(\calY)} \in \setR^{d_\calY}$ that satisfy the following conditions.
\begin{enumerate}
    \item[(a)] $(\Tilde{X}, \Tilde{Y}),~ Z^{(\calX)}$, and $Z^{(\calY)}$ are mutually independent.
    \item[(b)] There exist continuous bijective mappings $\funcdoms{h_\calX}{\setR^d\times \setR^{d_\calX}}{\calX}$ and $\funcdoms{h_\calY}{\setR^d \times \setR^{d_\calY}}{\calY}$ such that $X = h_\calX(\Tilde{X}, Z^{(\calX)})$ and $Y = h_\calY(\Tilde{Y}, Z^{(\calY)})$.
    \item[(c)] The support $\supp p_{\Tilde{X}, \Tilde{Y}} \subseteq \setR^d \times \setR^d$ of the distribution $p_{\tdX,\tdY}$ is compact.
    \item[(d)] The pointwise mutual information $\PMI_{\tdX,\tdY}(\Tilde{x}, \Tilde{y}):=\ln \frac{p_{\tdX,\tdY}(\Tilde{x}, \Tilde{y})}{p_\tdX(\Tilde{x})p_\tdY(\Tilde{y})}$ of $\tdX$ and $\tdY$ is an $L$-Lipschitz function on $\supp p_\tdX \times \supp p_\tdY$.
\end{enumerate}
\end{assumption}
The second assumption means that data samples, $X$ and $Y$, are generated from low-dimensional latent variables, $(\tld{X}, Z^\mkX)$ and $(\tld{Y}, Z^\mkY)$, respectively.
The first assumption means that dependency between $X$ and $Y$ stems only from $\tld{X}$ and $\tld{Y}$, and that $Z^\mkX$ and $Z^\mkY$ are latent variables specific to the domain $\calX$ and $\calY$, respectively.
From the first and second assumptions, it follows that there exists a 1-to-1 correspondence between $(x,y) \in \calX \times \calY$ and $(\Tilde{x}, \Tilde{y}, z^{(\calX)}, z^{(\calY)}) \in \setR^d \times \setR^d \times \setR^{d_\calX} \times \setR^{d_\calY}$, and that $\frac{p_{X,Y}(x,y)}{p_X(x)p_Y(y)} = \frac{p_{\Tilde{X},\Tilde{Y}} (\Tilde{x}, \Tilde{y}) p_{Z^{(\calX)}}(z^{(\calX)}) p_{Z^{(\calY)}}(z^{(\calY)})}{p_{\Tilde{X}}(\Tilde{x}) p_{Z^{(\calX)}}(z^{(\calX)}) p_{\Tilde{Y}}(\Tilde{y}) p_{Z^{(\calY)}}(z^{(\calY)}) } = \frac{p_{\Tilde{X},\Tilde{Y}} (\Tilde{x}, \Tilde{y})}{p_{\Tilde{X}}(\Tilde{x}) p_{\Tilde{Y}}(\Tilde{y}) }$.

To prove Theorem \ref{th:universality}, we use the following statements.
\begin{proposition}[\citep{aronszajn1950theory, sriperumbudur2011universality}] \label{prop:aronszajn}
Let $X$ be a topological space and let $\calH$ be a reproducing kernel Hilbert space of the functions on $X$ with $\funcdoms{k}{X\times X}{\setR}$ as its reproducing kernel. Then, 
\begin{align*}
    \set{\sum_{j\in\intset{n}}c_j k(\cdot,x_j)}[n\in\setN, \set{c_j: j\in\intset{n}}\subset\setR, \set{x_j: j\in \intset{n}}\subset X]
\end{align*}
is dense in $\calH$.
\end{proposition}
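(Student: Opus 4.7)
The plan is the standard Aronszajn argument: show that the orthogonal complement of the span of kernel sections is trivial in $\calH$, and then invoke the Hilbert-space identity $\overline{V} = (V^\perp)^\perp$ for linear subspaces $V$ to conclude density.

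Denote by $\calH_0 \subseteq \calH$ the set appearing in the statement, namely the linear span of the kernel sections $\{k(\cdot, x) : x \in X\}$. This is a linear subspace by construction, and each $k(\cdot, x)$ belongs to $\calH$ by the definition of an RKHS, so $\calH_0 \subseteq \calH$. I would next characterize its orthogonal complement $\calH_0^\perp$. Suppose $f \in \calH_0^\perp$; then in particular $\langle f, k(\cdot, x)\rangle_\calH = 0$ for every $x \in X$. Applying the reproducing property $f(x) = \langle f, k(\cdot, x)\rangle_\calH$ converts this into $f(x) = 0$ for all $x \in X$. Because an RKHS is by definition a Hilbert space of genuine \emph{functions} on $X$ (not equivalence classes), pointwise vanishing forces $f = 0$ in $\calH$. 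Hence $\calH_0^\perp = \{0\}$.

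To finish, I would invoke the standard Hilbert-space fact that $\overline{V} = (V^\perp)^\perp$ for any linear subspace $V$. Applied to $V = \calH_0$, this gives $\overline{\calH_0} = (\calH_0^\perp)^\perp = \{0\}^\perp = \calH$, which is precisely the claimed density of finite linear combinations of kernel sections in $\calH$.

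There is essentially no hard step: the whole argument reduces to the one-line observation that orthogonality to every $k(\cdot, x)$ is, via the reproducing property, the same as pointwise vanishing, together with the textbook orthogonal-complement identity for Hilbert spaces. The only point that deserves a brief remark is the distinction between an RKHS and a space of equivalence classes such as $L^2$ — it is exactly the pointwise-evaluation structure baked into the RKHS definition that legitimizes the step "$f(x) = 0$ for all $x \in X$ implies $f = 0$ in $\calH$," and without this structure the argument would fail.
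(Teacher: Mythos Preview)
Your argument is correct and is precisely the classical Aronszajn proof: use the reproducing property to show $\calH_0^\perp = \{0\}$, then conclude via $\overline{\calH_0} = (\calH_0^\perp)^\perp = \calH$. The paper itself does not supply a proof of this proposition---it is simply quoted as a known result with citations to \citet{aronszajn1950theory} and \citet{sriperumbudur2011universality}---so there is nothing to compare against, but what you have written is exactly the standard textbook justification and would be accepted without change.
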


\begin{lemma} \label{lem:uniform_norm}
Let $X$ be a topological space and let $\calH$ be a reproducing kernel Hilbert space of the functions on $X$ with a bounded kernel $\funcdoms{k}{X\times X}{\setR}$. Let $\sup_{x\in X} k(x,x) \leq \kappa$. For any $f, g \in \calH$, if $\norm{f-g}_\calH < \epsilon$, then $\norm{f-g}_\infty < \sqrt{\kappa} \epsilon$.
\end{lemma}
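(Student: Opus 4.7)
The plan is to exploit the reproducing property of $\calH$ together with the Cauchy--Schwarz inequality; no deeper machinery is required.

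First, I would recall the defining identity of the reproducing kernel: for every $h\in\calH$ and every $x\in X$, pointwise evaluation is represented as $h(x) = \langle h, k(\cdot, x)\rangle_\calH$. Applying this to $h = f-g$ yields $(f-g)(x) = \langle f-g,\, k(\cdot, x)\rangle_\calH$ for each fixed $x\in X$.

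Second, I would bound the right-hand side by Cauchy--Schwarz to obtain $|f(x) - g(x)| \le \|f-g\|_\calH \cdot \|k(\cdot, x)\|_\calH$. A further application of the reproducing property, this time to the function $k(\cdot, x)$ itself, gives $\|k(\cdot, x)\|_\calH^2 = \langle k(\cdot, x),\, k(\cdot, x)\rangle_\calH = k(x, x)$, so the hypothesis $\sup_{x\in X} k(x,x)\le \kappa$ yields $\|k(\cdot, x)\|_\calH \le \sqrt{\kappa}$ uniformly in $x$.

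Finally, combining these two estimates with the assumption $\|f-g\|_\calH < \epsilon$ gives $|f(x) - g(x)| < \sqrt{\kappa}\,\epsilon$ for every $x\in X$; taking the supremum over $x$ produces the desired bound $\|f-g\|_\infty < \sqrt{\kappa}\,\epsilon$.

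There is really no obstacle here: the lemma is a textbook consequence of the reproducing property together with a uniform bound on the diagonal of the kernel. The only mild care point is to apply the reproducing property twice --- once to represent $(f-g)(x)$ as an inner product against the kernel section $k(\cdot,x)$, and once more to evaluate $\|k(\cdot, x)\|_\calH^2 = k(x,x)$ so that the kernel boundedness hypothesis can be brought in.
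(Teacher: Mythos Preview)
Your argument is correct and matches the paper's proof essentially line for line: apply the reproducing property to write $(f-g)(x)=\langle f-g,\,k(\cdot,x)\rangle_\calH$, bound by Cauchy--Schwarz, and use $\|k(\cdot,x)\|_\calH^2=k(x,x)\le\kappa$. The paper compresses this into a single displayed chain of inequalities, but the content is identical.
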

\begin{proof}
    For any $x\in X$,
    \begin{align*}
        \abs{f(x) - g(x)} = \inpr{k(x,\cdot)}{f-g}_\calH \leq \norm{k(x,\cdot)}_\calH \norm{f-g}_\calH < \sqrt{\kappa} \epsilon. 
    \end{align*}
\end{proof}

\begin{definition}[$c_0$-universal, \citep{sriperumbudur2011universality}] \label{def:c0-universal}
A bounded kernel, $k$ with $k(\cdot, x) \in C_0(X), \forall x\in X$ on a locally compact Hausdorff space $X$, is said to be $c_0$-universal if the RKHS, $\calH$ induced by $k$ is dense in $C_0(X)$ w.r.t. the uniform norm. I.e., for every function $g\in C_0(X)$ and all $\epsilon > 0$, there exists an $f\in \calH$ such that $\norm{f-g}_\infty \leq \epsilon$.
\end{definition}

We now present the proof of Theorem \ref{th:universality}.
\begin{proof}[Proof of Theorem \ref{th:universality}]
    First, we fix $\epsilon > 0$. We prove the statement by explicitly constructing $M^\mkX, M^\mkY, f_\calX$, and $f_\calY$ that satisfy Eq.\ref{eq:universality}.
    
    From (b) of Assumption \ref{asmp:generation_process}, there exist continuous inverse functions of $h_\calX$ and $h_\calY$. Consider the following restrictions of the functions $h_\calX^{-1}$ and $h_\calY^{-1}$: for $x = h_\calX(\Tilde{x}, z^\mkX)$ and $y = h_\calY(\Tilde{y}, z^\mkY)$, it holds that
    \begin{align*}
        \Tilde{x} &= h_\calX^{-1}\vert_{\Tilde{X}}(x), \\
        z^\mkX &= h_\calX^{-1}\vert_{Z^\mkX}(x), \\
        \Tilde{y} &= h_\calY^{-1}\vert_{\Tilde{Y}}(y), \\
        z^\mkY &= h_\calY^{-1}\vert_{Z^\mkY}(y).
    \end{align*}
    Then, from (a) of Assumption \ref{asmp:generation_process}, it follows that
    \begin{align}
        \frac{p_{X,Y}(x,y)}{p_X(x)p_Y(y)}
        &= \frac{p_{\Tilde{X},\Tilde{Y}} (\Tilde{x}, \Tilde{y}) p_{Z^{(\calX)}}(z^{(\calX)}) p_{Z^{(\calY)}}(z^{(\calY)})}{p_{\Tilde{X}}(\Tilde{x}) p_{Z^{(\calX)}}(z^{(\calX)}) p_{\Tilde{Y}}(\Tilde{y}) p_{Z^{(\calY)}}(z^{(\calY)}) } \nonumber \\
        &= \frac{p_{\Tilde{X},\Tilde{Y}} (\Tilde{x}, \Tilde{y})}{p_{\Tilde{X}}(\Tilde{x}) p_{\Tilde{Y}}(\Tilde{y}) } \nonumber \\
        &= \frac{p_{\Tilde{X},\Tilde{Y}} (h_\calX^{-1} \vert_{\Tilde{X}} (x), h_\calY^{-1} \vert_{\Tilde{Y}} (y))}{p_{\Tilde{X}}(h_\calX^{-1} \vert_{\Tilde{X}} (x)) p_{\Tilde{Y}}(h_\calY^{-1} \vert_{\Tilde{Y}} (y)) }. \label{eq:invariant_pmi}
    \end{align}
    To avoid complicated notations, we simply denote $h_\calX^{-1} \vert_\tdX(x)$ as $\tld{x}(x)$ and $h_\calY^{-1} \vert_\tdY(y)$ as $\tld{y}(y)$ in the following.

    From (c) of Assumption \ref{asmp:generation_process}, Proposition \ref{prop:aronszajn}, Lemma \ref{lem:uniform_norm}, and the definition of the $c_0$-universal kernel, for any fixed $\tld{y}\in \supp p_\tdY$, there exist $M\in\setN$, $\set{c_j\in\setR}[j\in\intset{M}]$, and $\set{\tilde{\eta}_j \in \setR^d}[j\in\intset{M}]$ such that, for any $\tilde{x}\in \supp p_\tdX$,
    \begin{align}
        \abs{\PMI_{\tdX,\tdY}(\Tilde{x},\Tilde{y})
        - \sum_{j\in\intset{M}} c_j k(\Tilde{x}, \Tilde{\eta}_j)} < \frac{\epsilon}{2}.  \label{eq:approx_kernel}
    \end{align}
    We denote such $M, c_j$, and $\tld{\eta}_j$ as $M(\tld{y}), c_j(\tld{y})$ and $\tld{\eta}_j(\tld{y})$, respectively.

   Meanwhile, we define $B_r(\Tilde{y}) \subset \setR^d$ as the open ball of radius $r$ and center $\Tilde{y} \in \setR^d$. From (c) of Assumption \ref{asmp:generation_process}, the support of $p_\tdY$ is compact. Thus, for any $\epsilon>0$, there exist $J \in \setN$ and $J$ points $\tld{y}_1, \tld{y}_2, \cdots, \tld{y}_J \in \setR^d$ such that $\supp p_\tdY \subseteq \bigcup_{j=1}^J B_{\epsilon/(2L)}(\tld{y}_j)$.
   Given such $\tld{y}_j ~(j\in\intset{J})$, we define $\chi(\tld{y})$ for $\tld{y} \in S$ as one of the points, $\tld{y}_j ~ (j\in\intset{J})$ that satisfies $\tld{y}\in B_{\epsilon/(2L)}(\tld{y}_j)$. From (d) of Assumption \ref{asmp:generation_process}, it holds that, for any $(\tld{x},\tld{y})\in \supp p_{\tdX,\tdY}$,
   \begin{align}
       \abs{\PMI_{\tdX,\tdY}(\tld{x}, \tld{y}) - \PMI_{\tdX,\tdY}(\tld{x}, \chi(\tld{y}))} < \frac{\epsilon}{2}.  \label{eq:PMI_Lip}
   \end{align}

   Now, we are ready to construct desirable $M^\mkX, M^\mkY, f_\calX$ and $f_\calY$. Let $M^\mkX = 1$ and $M^\mkY = \max_{j\in\intset{J}} M(\tld{y}_j)$. We define $f_\calY: y \mapsto \set{\prn{w_j^\mkY, v_j^\mkY}}_{j\in\intset{M^\mkY}}$ as
   \begin{align*}
    \begin{array}{ll}
       w_j^\mkY = c_j\prn{ \chi \prn{ \tld{y}(y)}} & \text{for} ~ 1 \leq j \leq M \prn{ \chi \prn{\tld{y}(y)}}, \\
       w_j^\mkY = 0 & \text{for}~  M\prn{\chi\prn{\tld{y}(y)}} < j \leq M^\mkY,  \\
       v_j^\mkY = \tld{\eta}_j \prn{\chi\prn{\tld{y}(y)}} & \text{for} ~ 1 \leq j \leq M \prn{ \chi \prn{ \tld{y}(y)}} .
    \end{array}
   \end{align*}
   For $v_j^\mkY$ with $j$ such that $M \prn{ \chi \prn{ \tld{y}(y)}} < j \leq M^{\mkY}$, we can choose any point in $\setR^d$.
   We define $f_\calX$ as $f_\calX(x) = \set{(w_1, v_1)} := \set{(1, \tld{x}(x) )}$. Then, for every $(x,y) \in \supp p_{X,Y}\subseteq \calX \times \calY$, 
    \begin{align*}
        &\abs{\ln \frac{p_{X,Y}(x,y)}{p_X(x) p_Y(y)} - \tld{g}\prn{f_\calX(x), f_\calY(y)}
        } \\
        &= \abs{
            \PMI_{\tdX,\tdY} \prn{\tld{x}(x), \tld{y}(y)}
            - \sum_{i=1}^{M^\mkX} \sum_{j=1}^{M^\mkY} w_i^\mkX w_j^\mkY k (v_i^\mkX, v_j^\mkY)
        }  \\
        &\leq \abs{
            \PMI_{\tdX,\tdY} \prn{\tld{x}(x), \tld{y}(y)}
            - \PMI_{\tdX,\tdY} \prn{\tld{x}(x), \chi\prn{\tld{y}(y)}}
        } \\
        &\quad+ \abs{
            \PMI_{\tdX,\tdY} \prn{\tld{x}(x), \chi\prn{\tld{y}(y)}}
            - \sum_{i=1}^{M^\mkX} \sum_{j=1}^{M^\mkY} w_i^\mkX w_j^\mkY k (v_i^\mkX, v_j^\mkY)
        } \\
        &\leq \frac{\epsilon}{2} + \abs{
            \PMI_{\tdX,\tdY} \prn{\tld{x}(x), \chi\prn{\tld{y}(y)}}
            - \sum_{j=1}^{M\prn{\chi\prn{\tld{y}(y)}}} c_j\prn[\Big]{\chi\prn{\tld{y}(y)}} k\prn[\Big]{\tld{x}(x), \tld{\eta}_j \prn{\chi \prn{\tld{y}(y)}}}
        } \\
        &< \frac{\epsilon}{2} + \frac{\epsilon}{2} = \epsilon.
    \end{align*}
Here, the first inequality holds by the triangle inequality. The second inequality holds from Eq. \ref{eq:PMI_Lip} and the definitions of $f_\calX$ and $f_\calY$. The third inequality holds from Eq. \ref{eq:approx_kernel}.
\end{proof}

\end{document}